\newtheorem{theorem}{Theorem}
\newtheorem{lemma}[theorem]{Lemma} 
\newtheorem{proposition}[theorem]{Proposition}
\theoremstyle{definition}
\newtheorem{definition}[theorem]{Definition}
\newtheorem{example}[theorem]{Example} 
\newtheorem{remark}[theorem]{Remark}
\newcommand\numberthis{\addtocounter{equation}{1}\tag{\theequation}}
\DeclareRobustCommand{\norm}[1]{\left\lVert #1 \right\rVert} 
\DeclareRobustCommand{\bb}[1]{\mathbb{#1}} 
\DeclareRobustCommand{\c}[1]{\mathcal{#1}}
\DeclareRobustCommand{\beef}{\vphantom{\sum}}
\DeclareRobustCommand{\Set}[2][]{\left\{#1 \, \middle|\, #2 \right\}}
\DeclareRobustCommand{\bk}[2]{\left\langle #1,\,#2\right\rangle}
\DeclareRobustCommand{\inv}[1]{#1^{-1}}
\DeclareRobustCommand{\f}[2]{\frac{#1}{#2}}
\DeclareRobustCommand{\ep}{\varepsilon}
\DeclareRobustCommand{\eps}{\epsilon}
\DeclareRobustCommand{\d}{\delta}
\DeclareRobustCommand{\inv}[1]{{#1}^{-1}}
\DeclareRobustCommand{\a}{\alpha}
\DeclareRobustCommand{\vp}{\varphi}
\DeclareRobustCommand{\mr}[1]{\mathrm{#1}}
\newcommand\restr[2]{{
  \left.\kern-\nulldelimiterspace 
  #1 
  \vphantom{\rule{0pt}{9pt}} 
  \right|_{#2} 
  }}
\DeclareMathOperator{\sth}{star}
\DeclareMathOperator{\conv}{co}
\newcommand{\defeq}{\mathrel{\dot{=}}}
\title{Localization, Convexity, and Star Aggregation}
\author{%
  Suhas Vijaykumar\thanks{Supported by the MIT Jerry A. Hausman Graduate Dissertation Fellowship} \\
  Statistics Center and Dept.~of Economics\\
  Massachusetts Institute of Technology\\
  Cambridge, MA 02139 \\
  \texttt{suhasv@mit.edu} \\
}
\begin{document}

\doparttoc 
\faketableofcontents 
\part{} 

\maketitle

\begin{abstract}
  Offset Rademacher complexities have been shown to provide tight upper bounds for the square loss in a broad class of problems including improper statistical learning and online learning. We show that the offset complexity can be generalized to any loss that satisfies a certain general convexity condition. Further, we show that this condition is closely related to both exponential concavity and self-concordance, unifying apparently disparate results. By a novel geometric argument, many of our bounds translate to improper learning in a non-convex class with Audibert's star algorithm. Thus, the offset complexity provides a versatile analytic tool that covers both convex empirical risk minimization and improper learning under entropy conditions. Applying the method, we recover the optimal rates for proper and improper learning with the $p$-loss for $1 < p < \infty$, and show that improper variants of empirical risk minimization can attain fast rates for logistic regression and other generalized linear models. 
\end{abstract}

\section{Introduction}
A central question in statistical learning theory asks which properties of the empirical risk minimizer can be inferred without regard to the underlying data distribution. 
In particular, the convergence rate of empirical risk minimization has been shown to depend both on distribution-specific properties, such as Tsybakov's margin condition \citep{tsybakov_optimal_2004}, as well as distribution-independent or ``global'' properties, such as VC dimension of the class and curvature of the loss \citep{mendelson_improving_2002}.

The {local Rademacher complexities} of \citet{bartlett_local_2005} address this issue by giving bounds that are data dependent and also adapt to the global properties of the learning problem. 
More recently, the {offset Rademacher complexities} of \citet{rakhlin_online_2014} and \citet{liang_learning_2015} have been shown to provide similar bounds for the square loss in several contexts. The offset complexity provides a simplified analysis that transparently relates the {geometry} of the square loss to {statistical} properties of the estimator. 

We extend offset Rademacher complexities to systematically study how curvature of the loss translates to fast rates of convergence in proper and improper statistical learning. We show many results in the literature can be studied in a common geometric framework: from classical results on {uniform convexity} of the loss \citep{bartlett_convexity_2006,mendelson_improving_2002}, to newer results on {exponential concavity} \citep{erven_fast_2015,mehta_fast_2017} and {self-concordance} \citep{bilodeau_tight_2020,marteau-ferey_beyond_2019}. 

Remarkably, our analysis applies simultaneously to convex empirical risk minimization and to improper learning with the star algorithm of \citet{audibert_progressive_2008}. In each case, excess risk is tightly bounded by the offset Rademacher complexity. 
Thus, we give a clean and unified treatment of fast rates in proper and improper learning with curved loss. 
Our contributions may be summarized as follows.

\begin{enumerate}[wide, labelwidth=!, labelindent=0pt]
    \item We generalize the offset Rademacher complexities of \citet{liang_learning_2015} to show that when the loss is $(\mu, d)$-convex (defined below), excess risk is bounded by a corresponding offset Rademacher process.
    \item We show in this general setting that the ``star algorithm'' of \citet{audibert_progressive_2008} satisfies an upper bound matching the convex case, allowing us to simultaneously treat convex risk minimization and improper learning under general entropy conditions. 
    
    \item We show that $(\mu, d)$-convexity captures the behavior of exponentially concave, uniformly convex, and self concordant losses, leading to fast upper bounds for learning under the $p$-loss, the relative entropy, and the logistic loss.
    
    \item We observe that generalized linear models (glms) may be expressed as learning in a non-convex class subject to the exponentially concave relative entropy loss, and introduce a general procedure\textemdash the \emph{regularized star estimator}\textemdash which attains fast rates in arbitrary glms from a mixture of at most two proper estimators.
\end{enumerate}

In the case of $k$-ary logistic regression with bounded features $X \in \bb{R}^q$ and each predictor norm bounded by $B$, our upper bounds on the excess risk take the form 
\begin{equation} \frac{kq}{n} \cdot \mathrm{polylog}(n,k,B,1/\rho) \label{logistic-bound} \end{equation}
with probability at least $1-\rho$. Thus, in the i.i.d.~setting, fast rates that depend only logarithmically on $B$ are achievable by a simple, improper variant of empirical risk minimization. 
\subsection{Relation to Prior Work} 
This paper examines the \emph{localization} or \emph{fast-rate} phenomenon, where the excess loss vanishes at a rate exceeding the model's uniform law of large numbers
\citep{koltchinskii_rademacher_2000, tsybakov_optimal_2004}. 

Recently, \citet{liang_learning_2015} showed that localization for the square loss is captured by the so-called \emph{offset Rademacher complexity}. In addition to being analytically straightforward, the offset complexity appears in a minimax analysis of {online} regression \citep{rakhlin_online_2014}, suggesting a unified treatment of online and statistical learning. 

In parallel work, \cite{erven_fast_2015} have showed that {exp-concavity} of the loss, often studied in online optimization, implies fast rates in the statistical setting via the probabilistic ``{central condition}'' (see also \cite{juditsky_learning_2008,mehta_fast_2017}). 

Our work unifies these results. In particular, we show that the offset complexity upper bound can be generalized to arbitrary losses satisfying a certain curvature condition that includes exp-concave losses. These results considerably extend the scope of offset Rademacher complexities in the statistical setting, as we illustrate; we also believe it can refine the minimax analysis of online regression with exp-concave loss, which we leave to subsequent work.

Another noteworthy aspect of our work is its treatment of \emph{improper learning}\textemdash that is, minimizing loss compared to the class $\c F$ with an estimator that need not belong to $\c F$. Unlike prior work, which typically involved aggregation over a finite subset of $\c F$ \citep{rakhlin_empirical_2017,10.1214/08-AOS623}, we provide a completely unified analysis that simultaneously leads to optimal rates for convex empirical risk minimization {and} improper learning under general entropy conditions. In particular, we circumvent the issue that passing to the convex hull of certain classes can ``blow up'' their complexity and lead to slower rates \citep{mendelson_improving_2002}. 

 We apply our analysis to generalized linear models (glms), such as logistic regression, where standard procedures surprisingly do not attain fast rates \citep{pmlr-v35-hazan14a}. 
Inspired by \citet{foster_logistic_2018}, we note that any glm can be recast as learning in a {non-convex} class subject to the exp-concave relative entropy. The star algorithm therefore attains fast rates for a wide class of generalized linear models, improving on prior results that exploit strong convexity (see e.g. \citet{rigollet_kullbackleibler_2012}). 

Unlike other recent works on improper logistic regression, we do not prove a polynomial time complexity for our procedure, nor do we study the online setting; these are left as open problems for future work. However, our procedure remains of interest for its simplicity\textemdash it outputs a fixed mixture of two predictors as opposed to optimizing at each query\textemdash and for its applicability to arbitrary glms \citep{foster_logistic_2018, mourtada2020improper, pmlr-v125-jezequel20a}.


\section{Margin inequalities}
We begin our discussion with the following definition which extends the classical \emph{uniform convexity} (see e.g. \citet{chidume_inequalities_2009}), giving a quantitative bound on the extent to which the plane tangent to $f$ at $(y, f(y))$ lies below the graph of $f$.
\begin{definition}
  Let $\mu: \bb{R} \to \bb{R}$ be increasing and convex with $\mu(0)=0$, and let $d(x,y)$ be a pseudo-metric on the vector space $B$. A differentiable convex function $f: B \to \bb{R}$ with gradient $\nabla f$ is \emph{$(\mu, d)$-convex} if for all $x,\,y \in B$,
  \begin{equation}
    f(x) - f(y) - \bk{\nabla f|_y}{x-y} \ge \mu(d(x,y)) \label{uniform-convexity-lambda}
      \end{equation}
\end{definition} 
In particular, this condition implies that the lower sets of $f$,
\[f_x = \Set[z]{\beef f(z) \le f(x)},\] are strictly convex, and that for any closed and convex $K$ there is a unique $\hat x \in \arg\min_{x \in K} f(x)$.
We adopt the notation that for a subset $S \subset B$, $\partial S$ denotes the boundary of $S$, $S^\circ = S \setminus \partial S$ denotes the interior, and $\conv S$ denotes the convex hull.  
\begin{remark}
    At this point, we are intentionally vague about the role of the metric, $d$. We shall subsequently see that the choice of $d$ is crucial for producing good upper bounds. The conventional choice is $d(x,y) = |x-y|$ and $\mu(z) = \a z^p$, corresponding to \emph{$p$-uniform convexity (with modulus $\a$)}. However, our results on self-concordant and exp-concave losses use non-standard choices of $d$.
\end{remark}
By a standard argument, given in Lemma \ref{convex-class} below, the condition \eqref{uniform-convexity-lambda} on the loss implies that the empirical risk minimizer in a convex class satisfies a \emph{geometric margin inequality}. Intuitively, if another hypothesis performs nearly as well as the empirical minimizer in the sample, it must also be $d$-close to the empirical minimizer.
\begin{lemma}\label{convex-class}
    Let $K$ be a closed, convex subset of $B$. If $f$ is $(\mu,d)$-convex and $\hat x$ minimizes $f$ over $K$ then for all $x \in K$
    \begin{equation}
    f(x) - f(\hat x) \ge \mu(d(x, \hat x)). \label{lemma-modulus}
    \end{equation} 
\end{lemma}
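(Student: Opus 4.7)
The plan is to combine the $(\mu,d)$-convexity inequality at the pair $(x,\hat x)$ with the first-order optimality condition for $\hat x$ as a minimizer of $f$ over the convex set $K$. Specifically, $(\mu,d)$-convexity applied with $y = \hat x$ yields
\[
f(x) - f(\hat x) - \bk{\nabla f|_{\hat x}}{x - \hat x} \;\ge\; \mu(d(x,\hat x)),
\]
so it suffices to show that the inner-product term is nonnegative for every $x \in K$. Once that is established, rearranging gives exactly \eqref{lemma-modulus}.

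To handle the inner-product term, I would invoke the standard variational characterization of minimizers of a convex function over a closed convex set: there exists a subgradient $g \in \partial f(\hat x)$ such that $\bk{g}{x - \hat x} \ge 0$ for all $x \in K$. This is the usual first-order optimality condition and follows directly from the definition of the subdifferential together with the fact that, for any $x \in K$ and $t \in (0,1]$, the point $\hat x + t(x - \hat x)$ lies in $K$ and hence $f(\hat x + t(x - \hat x)) \ge f(\hat x)$; dividing by $t$ and taking the limit gives the desired inequality. The $(\mu,d)$-convexity condition holds for every element of the subdifferential (since $\nabla f$ in the definition denotes an arbitrary subgradient), so we may apply it with this particular $g$ in place of $\nabla f|_{\hat x}$.

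Combining the two ingredients finishes the proof:
\[
f(x) - f(\hat x) \;\ge\; \bk{g}{x - \hat x} + \mu(d(x,\hat x)) \;\ge\; \mu(d(x,\hat x)).
\]

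The only mild subtlety, and the step I would be most careful about, is the selection of a subgradient that simultaneously witnesses both the uniform-convexity inequality and the optimality condition. If $f$ is differentiable this is automatic; in the nonsmooth case it requires the observation just made that the $(\mu,d)$-convexity inequality is valid with any subgradient, so the existence of some optimality-witnessing subgradient is enough. No other obstacles are anticipated, since everything else is a direct rearrangement of the defining inequality.
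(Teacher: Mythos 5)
Your proposal is correct and follows essentially the same route as the paper: apply the $(\mu,d)$-convexity inequality at $(x,\hat x)$ and show $\bk{\nabla f|_{\hat x}}{x-\hat x}\ge 0$ via first-order optimality along the segment $\hat x + t(x-\hat x)\in K$, which is exactly the paper's argument with $\vp(\lambda)=f(\lambda x+(1-\lambda)\hat x)$. Your extra remark on choosing a subgradient that witnesses both the optimality condition and the convexity inequality is a careful touch the paper glosses over, but it does not change the argument.
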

\begin{proof}
    Let $z_\lambda = \lambda x + (1-\lambda)\hat x$ be the line segment interpolating $x$ and $\hat x$, and put $\vp(\lambda) = f(z_\lambda)$. By convexity of $K$, we must have $z_\lambda \in K$, hence $\vp$ is minimized at $\lambda = 0$ by optimality of $\hat x$. Hence, $\vp(\lambda) - \vp(0) \ge 0$ for all $\lambda$, and $\nabla \vp$ must satisfy
    \[\nabla \vp|_{\lambda=0} = \bk{\nabla f|_{\hat x}}{x - \hat x} \ge 0.\]
    Thus, 
    \begin{equation*}
    f(x) - f(\hat x) \ge f(x) - f(\hat x) - \bk{\nabla f|_{\hat x}}{x - \hat x} \ge \mu(d(x,\hat x)) \label{zlambda},
    \end{equation*}
    proving the lemma.
\end{proof}
  \subsection{The star algorithm} We now introduce our first main result, which extends the above analysis to the problem of learning in a non-convex class. Surprisingly, in the general setting of $(\mu,d)$-convexity, an inequality matching \eqref{lemma-modulus} is \emph{always} attained by a simple, two-step variant of empirical risk minimization. 
  
  Let us now introduce the procedure. Given $x \in S$, the ``star hull'' of $S$ with respect to $x$ is defined by 
  \[\sth(x, S) = \Set[\lambda x + (1-\lambda)s]{s \in S, \lambda \in [0,1]}.\] 
  The procedure may be summarized as follows. 

  \clearpage 

  \hrule
  \begin{algorithmic}[1]
  \Procedure{Star}{$S,f$}
  \State Find $\hat x$ that minimizes $f$ over $S$
  \State Find $\tilde x$ that minimizes $f$ over $\sth(\hat x, S)$
  \State \textbf{Output} $\tilde x $
  \EndProcedure
  \end{algorithmic}
  \hrule

  \smallskip

  Our next goal is to show that if $S$ is an arbitrary (not necessarily convex) set, and $\tilde x$ is the output of the star algorithm, then
  \begin{equation}
  f(x) - f(\tilde x) \ge \mu\left(\textstyle\frac{1}{3}d(x,\tilde x)\right) \label{star-lemma}
  \end{equation}
  whenever $f$ is $(\mu, d)$-convex. 
  
  In particular, if $f$ is the empirical risk and $S$ is the function class, then $\tilde x$ is the ``star estimator'' introduced by \citet{audibert_progressive_2008}. Thus, whenever $(\mu,d)$-convexity leads to an upper bound for empirical risk minimization in a convex class via the  inequality \eqref{lemma-modulus}, the same bound is enjoyed up to constants by the star algorithm in an arbitrary class. 
  \begin{figure}
    \vspace{-.4in}
    \centering
    \begin{minipage}[t]{0.4\textwidth}
      \centering
      \includegraphics[trim=-5em 0em -5em -5em,width=\textwidth]{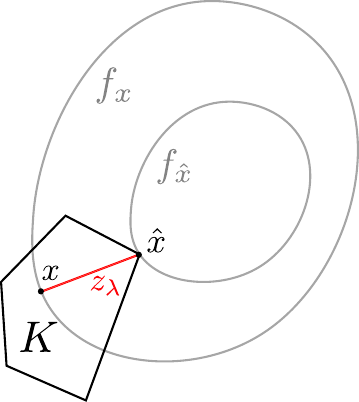}
      \caption{\label{fig1}Illustration of Lemma \ref{convex-class}. Convexity implies the existence of a direct path $z_\lambda$ from $x$ to $\hat x$.}
    \end{minipage}\hfill
    \begin{minipage}[t]{0.5\textwidth}
      \centering
      \includegraphics[width=\textwidth]{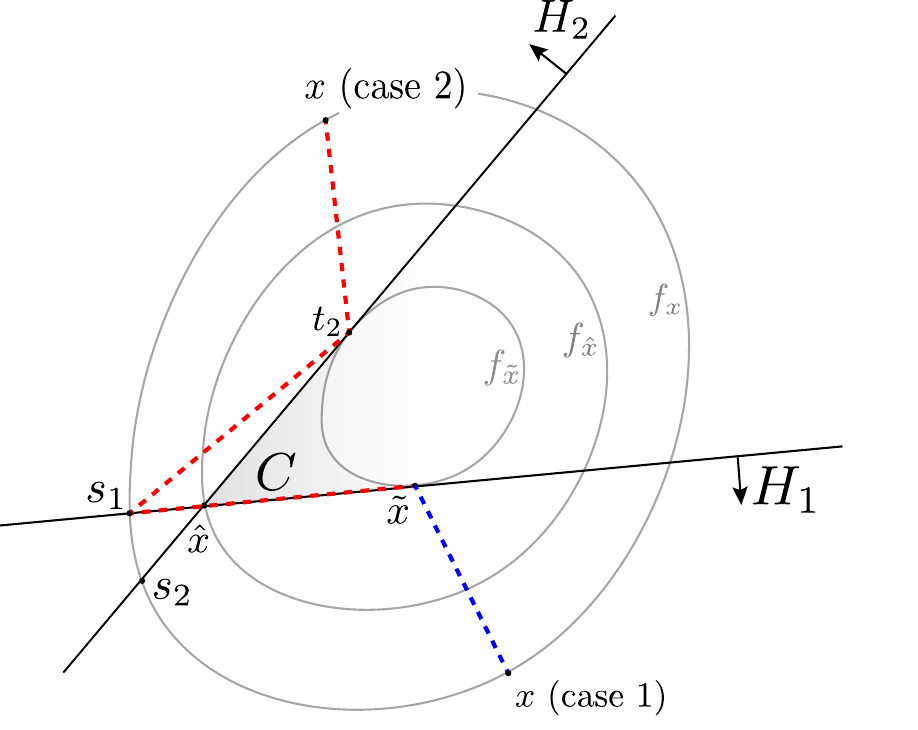}
      \caption{\label{fig2}Illustration of Proposition \ref{lemma-1-analog}. Depicted is the plane containing $(x, \hat x, \tilde x)$; $C$ is the minimal cone containing $f_{\tilde x}$ with vertex at $\hat x$. By optimality of $\tilde x$, $S$ cannot intersect the interior of $C \cup f_{\hat x}$, implying existence of one of the indicated paths.}
    \end{minipage}
\end{figure}

 To prove \eqref{star-lemma}, recall how we proved Lemma \ref{convex-class}: we showed that if the line-segment $\lambda \mapsto z_\lambda$ which linearly interpolates between any two points $x$ and $y$ lies entirely in the set $f_x \setminus f_y^\circ$, so $f(z_\lambda) \ge f(y)$, then we have that
  \begin{equation*} 
  f(x) - f(y) \ge \mu(d(x,y)).
  \end{equation*}
  This is illustrated in Figure \ref{fig1}.
  Unfortunately, the existence of such a line-segment $z_\lambda$ connecting $x$ and $\tilde x$ relies on the convexity of $S$. 
  In order to extend the argument as we want to, we make two observations. Firstly, the line segment $\lambda \mapsto z_\lambda$ can be chosen to be \emph{any} line segment connecting $\partial f_x$ to $\partial f_y$ and lying entirely in $f_x \setminus f_y^\circ$. Then, since $\partial f_x$ and $\partial f_y$ are level sets, the same argument yields
  \begin{equation} 
  f(x) - f(y) = f(z_1) - f(z_0) \ge \mu(d(z_1,z_0)). \label{any-line}
  \end{equation}
  Secondly, we can show that there exist \emph{at most three} such line-segments which together comprise a path from $x$ to $\tilde x$. By the triangle inequality, at least one of these segments has length greater than $\frac 1 3 d(\tilde x,x)$. Combining this with \eqref{any-line} yields the desired inequality \eqref{star-lemma}. We state this as a proposition and prove it formally in Appendix \ref{prf:lemma-1-analog}. 
  A straightforward illustration of the proof, following the preceding discussion, is provided in Figure \ref{fig2}.
  \begin{restatable}{prop}{star}\label{lemma-1-analog}Let $f$ be $(\mu,d)$-convex. Suppose $\hat x$ minimizes $f$ over $S$ and $\tilde x$ minimizes $f$ over $\sth (\hat x, S)$. Then, for any $x \in S$, 
  \[f(x) - f(\tilde x) \ge \mu\left(\textstyle\frac{1}{3}d(x, \tilde x)\right).\]
  \end{restatable}
\section{From margins to upper bounds}
We will now show how the previous section's margin inequalities lead to upper bounds on the excess risk. 

\subsection{Statistical setting and notation}
First, let us introduce the statistical setting and some notation. We observe data in the form of $n$ i.i.d.~pairs $\{(X_i, Y_i)\}_{i=1}^n \subset D \times \bb R$ on a common probability space, $\bb P$. 
Given a set $\c F$ of predictors $f: D \to \bb R$ and a loss function $\psi: \bb R \times \bb R \to \bb R$, we aim to minimize the excess $\psi$-risk relative to $\c F$,
\[\c E(\hat f; \c F, \psi) \defeq \bb{E}\psi(\hat f(X),Y) - \inf_{f \in \c F}\{\bb{E} \psi(f(X),Y)]\}.\] Here, $(X,Y)$ is an independent pair with the same law as the observed data $(X_i, Y_i)$. We will suppress the dependence on $\c F$ and $\psi$ where confusion does not arise. 

We will use the shorthand $h$ or $h_i$ to denote the random variables $h(X)$ or $h(X_i)$, and will denote the function $(x,y) \mapsto \psi(h(x),y)$ by simply $\psi(h)$. Frequently, we combine these two abuses of notation to write $\psi_i(h) \defeq \psi(h(X_i), Y_i)$. We write $\psi \circ \c F$ for the corresponding class of functions $\Set[\psi(h)]{h \in \c F}$, and will use $\bb{E}_n$ to denote the empirical expectation $\f{1}{n} \sum_i \d_{(X_i, Y_i)}$. 

\subsection{Empirical margin inequality}
Our first lemma extends $(\mu,d)$-convexity of the loss $x\mapsto\psi(x,y)$ to the empirical risk. 
\begin{lemma} Suppose that for each $y$, $x \mapsto \psi(x,y)$ is $(\mu,d)$-convex in $x$. Then the empirical $\psi$-risk, given by
    $f \mapsto \bb{E}_n\psi(f)$, is $(\mu,d_n)$-convex in $f$, where $d_n \defeq \inv\mu \circ \bb{E}_n (\mu \circ d)$. \label{empirical-convexity-lemma}
\end{lemma}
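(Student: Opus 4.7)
The approach is a direct computation leveraging linearity. Write $R_n(f) := \bb{E}_n[\psi(f(X_i), Y_i)]$ and observe that both $R_n$ and its sub-gradient are linear combinations of the point maps $f \mapsto \psi(f(X_i), Y_i)$. Hence, for any $f, g$ in the ambient function space,
\begin{align*}
R_n(f) - R_n(g) - \bk{\nabla R_n|_g}{f - g} = \bb{E}_n\!\left[\beef \psi(f(X_i), Y_i) - \psi(g(X_i), Y_i) - \bk{\nabla \psi(\cdot, Y_i)|_{g(X_i)}}{f(X_i) - g(X_i)}\right].
\end{align*}

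Next, by the $(\mu, d)$-convexity of each $x \mapsto \psi(x, Y_i)$, each bracketed summand is at least $\mu(d(f(X_i), g(X_i)))$. Averaging preserves this inequality, so
\[
R_n(f) - R_n(g) - \bk{\nabla R_n|_g}{f - g} \ge \bb{E}_n[\mu(d(f(X_i), g(X_i)))] = \mu(d_n(f, g)),
\]
where the final equality is just the definition $d_n = \inv\mu \circ \bb{E}_n(\mu \circ d)$. This is exactly the $(\mu, d_n)$-convexity of $R_n$.

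The main technical hurdle is ensuring that $d_n$ actually defines a metric, as the definition of $(\mu,d)$-convexity demands. Non-negativity (using $\mu(0) = 0$, which is forced by applying \eqref{uniform-convexity-lambda} at $x=y$) and symmetry are immediate from the corresponding properties of $d$ and the monotonicity of $\mu$. The triangle inequality for $d_n$ reduces to an Orlicz--Minkowski type inequality for the functional $u \mapsto \inv\mu\bigl(\bb{E}_n[\mu(u)]\bigr)$ on $\bb{R}_+^n$; it is immediate when $\mu(z) = \alpha z^p$ with $p \ge 1$ via the classical Minkowski inequality, which covers the $p$-uniformly convex applications of primary interest, but the fully general convex case is where I would expect any real work to lie. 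Fortunately, the only property of $d_n$ actually invoked downstream is the displayed convexity inequality itself, so one could safely relax the definition to permit a symmetric divergence in place of a metric without affecting any subsequent argument.
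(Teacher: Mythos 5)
Your core argument is correct and is essentially the paper's own proof: average the pointwise $(\mu,d)$-convexity inequalities over $i$, use linearity of the empirical sub-gradient and inner product, and cancel $\mu \circ \inv\mu$ against the definition of $d_n$. One correction to your closing remark, though: it is not true that only the displayed convexity inequality is invoked downstream. The convex-class margin bound \eqref{eq:erm-margin-convex} indeed needs nothing beyond it (the proof of Lemma \ref{convex-class} never uses the metric axioms), but the star-algorithm bound \eqref{eq:star-margin} is obtained by applying Proposition \ref{lemma-1-analog} to the empirical risk equipped with $d_n$, and that proof does use the triangle inequality for the metric, to conclude that one of the three segments joining $\tilde x$ to $x$ has length at least $\frac{1}{3}d_n(x,\tilde x)$. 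So the Orlicz--Mulholland issue you raise is a genuine gap in full generality (and is left implicit by the paper); it is harmless in the applications, where $\mu(z)=\alpha z^p$ makes $d_n$ an $L^p(\bb P_n)$-type distance so that Minkowski's inequality applies, exactly as you note.
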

\begin{proof}
By $(\mu,d)$-convexity, we have for each $1 \le i \le n$ 
\[\psi_i(f) - \psi_i(g) - \bk{\nabla \psi|_{g_i}}{f_i - g_i } \ge \mu(d(f_i,g_i)).\]
Averaging these inequalities, noting that the inner product and sub-gradient commute with the averaging operation, and also that $\mu \circ \inv\mu \circ \bb{E}_n (\mu \circ d) = \bb{E}_n (\mu \circ d)$, yields Lemma \ref{empirical-convexity-lemma}. 
\end{proof}
Now, if $\c F$ is convex with empirical risk minimizer $\hat f$, applying Lemma \ref{convex-class} with $x$  chosen to be the population risk minimizer $f^*$ gives
\begin{equation}
    \bb{E}_n\psi(f^*) - \bb{E}_n\psi(\hat f) \ge \bb{E}_n \mu(d(\hat f, f^*)). \label{eq:erm-margin-convex}
\end{equation}
Similarly, if $\c F$ is not convex and $\tilde f$ is produced by the star algorithm applied to $\bb{E}_n\psi$, we have by Proposition \ref{lemma-1-analog}
\begin{equation}
    \bb{E}_n\psi(f^*) - \bb{E}_n\psi(\tilde f) \ge \bb{E}_n \mu(d(\tilde f, f^*)/3).\label{eq:star-margin}
\end{equation}
We note that in our applications, $\mu$ will typically have the form $x \mapsto cx^2$, so \eqref{eq:erm-margin-convex} and \eqref{eq:star-margin} are comparable up to a constant factor of $9$. To avoid complication, we will only work with the more general inequality \eqref{eq:star-margin}. The reader may note that in a convex class the star estimator coincides with the empirical risk minimizer, so we are treating both cases simultaneously at the expense of slightly larger constants.
\begin{example}[Square loss] Before proceeding, it is helpful to consider the square loss $\psi(x,a) = (x-a)^2$ as an example. A convenient aspect of $(\mu,d)$-convexity is that the optimal $\mu \circ d$ can be computed as
    \begin{align*}
      \mu(d(x,y)) 
      &= \psi(x) - \psi(y) - \bk{\nabla \psi|_y}{x-y} \\
      &= \lim_{\lambda \downarrow 0} \frac{\partial}{\partial\lambda} 
      \lambda\psi(x) + (1-\lambda)\psi(y) - \psi(\lambda x + (1-\lambda) y),
    \end{align*} which in this case is simply $(x-y)^2$.
    Taking $d(x,y) = |x-y|$ and $\mu(z)=z^2$, we recover the familiar inequality
    \[\bb{E}_n(Y- f^*)^2 - \bb{E}_n(Y- \hat f)^2 \ge \bb{E}_n(f^* - \hat f)^2.\]  Moving to the non-convex case with the star algorithm introduces a factor of $1/9$ on the right-hand side, improving on \citet{liang_learning_2015} by a constant. This analysis extends, with little complication to the $p$-loss for all $1 < p < \infty$. We will discuss these results in Section \ref{sec:p-loss}.
\end{example}
\subsection{The offset empirical process}
We have now derived a family of margin inequalities for the star estimator,
\begin{equation} 
\bb{E}_n\psi (f^*) - \bb{E}_n\psi (\tilde f) - \bb{E}_n\mu(d(f^*,\tilde f)/3) \ge 0.
\end{equation}
 Adding this to the excess risk 
 gives us the upper bound
\begin{equation}
    \begin{split}
\c E(\tilde f) = \bb{E}\psi  (\tilde f) - \bb{E}\psi(f^*) 
&\le (\bb{E}_n - \bb{E})(\psi (f^*) - \psi (\tilde  f)) - \bb{E}_n\mu(d(\tilde f,f^*)/3) \\
& \le \sup_{f \in \c F'} \left\{ \beef (\bb{E}_n - \bb{E})(\psi ( f^*) - \psi (f) ) - \bb{E}_n\mu(d( f,f^*)/3) \right\}
    \end{split}\label{ub-1}
\end{equation}
for $\c F' = \cup_{\lambda \in [0,1]} \lambda \c F + (1-\lambda) \c F$ (the Minkowski sum).
After symmetrization and contraction arguments, which follow along the lines of \citet{liang_learning_2015}, we have the following result. The proof is given in Appendix \ref{prf:offset-basic}.
\begin{restatable}{prop}{symmetrization} \label{thm:offset-basic}
Let $\c F$ be a model class, $\psi$ a $(\mu,d)$-convex loss, and $f^*$ the population minimizer of the $\psi$-risk. Then, the star estimator $\tilde f$ satisfies the excess risk bound
\begin{equation} \bb{E}\Psi(\c E(\tilde f)) \le \bb{E} \Psi \left(\sup_{f \in \c F'}\left\{ \frac{1}{n} \sum_{i=1}^n 2\ep_i'(\psi_i(f^*) - \psi_i(f)) 
  - (1+\ep'_i)\mu(\textstyle\f 1 3 d(f_i, f^*_i))\right\}\right)  \label{eq:offset-basic} \end{equation}
where the $\ep'_i$ are i.i.d. symmetric Rademacher random variables, $\c F' = \cup_{\lambda \in [0,1]} \lambda \c F + (1-\lambda) \c F$, and $\Psi: \bb{R} \to \bb{R}$ is any increasing, convex function. 
\end{restatable}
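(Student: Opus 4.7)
The strategy is to follow the ghost-sample symmetrization template of \citet{liang_learning_2015}, now adapted to the general offset penalty $\bb{E}_n\mu(d(f,f^*)/3)$. Starting from \eqref{ub-1}, applying the increasing, convex function $\Psi$ and taking expectations on both sides yields
\[\bb{E}\Psi(\c E(\tilde f)) \le \bb{E}\Psi\Big(\sup_{f\in \c F'}\{(\bb{E}_n - \bb{E})(\psi(f^*) - \psi(f)) - \bb{E}_n\mu(d(f,f^*)/3)\}\Big).\]
The first step is to introduce an independent ghost sample $(Z'_1,\ldots,Z'_n)$ and rewrite $\bb{E}\psi(f) = \bb{E}'[\bb{E}'_n\psi(f)]$; because $\Psi$ is convex and monotone, Jensen's inequality moves the expectation over the ghost sample outside both the supremum and $\Psi$, reducing the problem to controlling a two-sample version of the process in which $\bb{E}$ is replaced by $\bb{E}'_n$ inside the supremum.

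Next, I would introduce i.i.d.\ Rademacher variables $\ep_i$ by exploiting the exchangeability of each pair $(Z_i, Z'_i)$. Swapping $Z_i \leftrightarrow Z'_i$ according to $\ep_i$ turns the centered empirical process into the Rademacher-weighted sum $\frac{1}{n}\sum_i \ep_i\big((\psi(f^*)-\psi(f))(Z_i) - (\psi(f^*)-\psi(f))(Z'_i)\big)$, while the offset $\bb{E}_n\mu(d(f,f^*)/3)$ gets redistributed across the two samples as $\frac{1}{n}\sum_i\big[\tfrac{1+\ep_i}{2}\mu_f(Z_i) + \tfrac{1-\ep_i}{2}\mu_f(Z'_i)\big]$, where $\mu_f(Z) := \mu(d(f,f^*)/3)(Z)$. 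Splitting the supremum into its $Z$- and $Z'$-contributions via subadditivity, applying the convexity bound $\Psi(a+b) \le \tfrac{1}{2}\Psi(2a) + \tfrac{1}{2}\Psi(2b)$, and using the distributional symmetry between the two samples (flip all Rademacher signs and exchange $Z\leftrightarrow Z'$) then produces a factor of $2$ in front of a single-sample Rademacher process. Combined with the factor of $2$ implicit in passing from the two-sample process to a Rademacher average, this yields the overall constant $4$ appearing in \eqref{eq:offset-basic}.

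The main obstacle is the careful bookkeeping of the offset term throughout the symmetrization. In the square-loss setting of \citet{liang_learning_2015}, the penalty is quadratic and interacts cleanly with the Rademacher swap; here the penalty $\mu(d(f,f^*)/3)$ is an arbitrary convex function of an abstract metric $d$, and I can only use its non-negativity and the symmetry of the Rademacher distribution. The delicate point is that after the swap the offset carries a spurious factor $\tfrac{1+\ep_i}{2}$, which must be reconciled with the clean deterministic penalty $\mu(d(f_i,f^*_i)/3)$ in the statement; after the $Z/Z'$ split each Rademacher sign acts on only one sample, so the $\ep_i$-dependence in the offset can be absorbed into the Rademacher sum on $\psi(f)-\psi(f^*)$, and this absorption is precisely what produces the factor $4$ rather than $2$ on the Rademacher term.
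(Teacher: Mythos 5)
Your symmetrization scheme is sound as far as it goes, and in fact it re-derives in-line the paper's offset symmetrization lemma (Lemma \ref{lem:offset-symm}): after the random swap and the $\Psi(a+b)\le\tfrac12\Psi(2a)+\tfrac12\Psi(2b)$ split, writing $(1+\ep_i)\mu_i(f)=\mu_i(f)+\ep_i\mu_i(f)$ with $\delta_i(f)=\psi(f^*_i)-\psi(f_i)$ and $\mu_i(f)=\mu(d(f_i,f^*_i)/3)$, you land exactly on the paper's intermediate bound
\begin{equation*}
\bb{E}\Psi(\c E(\tilde f)) \;\le\; \bb{E}\Psi\left(\sup_{f\in\c F'}\left\{\frac1n\sum_{i=1}^n \ep_i\bigl(2\delta_i(f)-\mu_i(f)\bigr)-\mu_i(f)\right\}\right).
\end{equation*}
The gap is in your last step. ``Absorbing the $\ep_i$-dependence of the offset into the Rademacher sum'' is only the algebraic identity $2\ep_i\delta_i-(1+\ep_i)\mu_i=\ep_i(2\delta_i-\mu_i)-\mu_i$; it does not convert the multiplier $2\delta_i(f)-\mu_i(f)$ into $4\delta_i(f)$, and the factor $4$ cannot come from ``passing from the two-sample process to a Rademacher average,'' since the swap is an equality in distribution and contributes no constant. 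To reach \eqref{eq:offset-basic} you must replace the Rademacher-weighted functions $2\delta_i(f)-\mu_i(f)$ by $4\delta_i(f)$ while leaving the deterministic offset $-\mu_i(f)$ intact, and that is a genuine contraction step, not bookkeeping.

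That step is where $(\mu,d)$-convexity enters a second time, and your proposal never uses it beyond \eqref{ub-1}. The paper proves (Lemma \ref{lem:basic-contraction}) that $\tfrac12\mu\bigl(\tfrac13 d(x,y)\bigr)\le|\psi(x)-\psi(y)|$\textemdash by minimizing $\psi$ on the segment joining $x$ and $y$, applying Lemma \ref{convex-class} to each half, and using the triangle inequality for $d$ together with monotonicity and convexity of $\mu$\textemdash whence $|\,\delta_i(f)-\tfrac12\mu_i(f)\,|\le 2|\delta_i(f)|$; it then invokes an offset-preserving form of Talagrand's contraction (Lemma \ref{lem:offset-cont}) valid inside $\bb{E}\Psi(\sup\cdot)$, and this is what produces the coefficient $4$ on $\ep_i(\psi(f_i)-\psi(f^*_i))$ with the clean penalty $\mu(d(f_i,f^*_i)/3)$. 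Note the comparison is not automatic for an arbitrary convex penalty: it holds precisely because $\mu\circ d$ is a modulus of convexity for $\psi$ itself. As written, the strongest conclusion your argument supports is the display above, not the claimed bound; supply the inequality $\mu(\tfrac13 d(x,y))\le 2|\psi(x)-\psi(y)|$ and the offset contraction lemma to finish, at which point your route and the paper's coincide.
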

\begin{remark}
Proposition \ref{thm:offset-basic} quantifies localization for the loss $\psi$. The excess risk is upper bounded by a \emph{non-centered} Rademacher complexity with the asymmetric term $-\mu(d(f_i,f_i^*)/3)$. This term captures the fact that hypotheses $f$ which are far from $f^*$ perform worse in expectation, and are thus unlikely to be selected. As a result, these hypotheses do not contribute much to the excess risk.
\end{remark}
\begin{remark} \label{rmk:data-dependent}
This bound contains the unknown quantity $f^* \in \c F$, and using it requires taking the supremum over possible $f^*$. This leaves open the possibility of exploiting prior information of the form $f^* \in \c P \subsetneq \c F$. While our bound is stated in expectation, a more elaborate symmetrization argument can provide data-dependent bounds on the excess risk, see \citet{liang_learning_2015}.
\end{remark}
\section{Exp-concave localization}
We will now investigate how our upper bound \eqref{eq:offset-basic} applies to $\eta$-exp-concave losses, namely those for which $e^{-\eta\psi}$ is concave, or equivalently
  \[\psi''/(\psi')^2 \ge \eta.\]
It turns out that this condition leads to very natural bounds on the excess risk. To provide intuition as to why, note that exploiting the bound \eqref{eq:offset-basic} requires comparing the risk increments $\psi(f) - \psi(f^*)$ to the offset $\mu(d(f,f^*))$.
Under {strong convexity}, we have an offset of the form $\a|f-f^*|^2$ and must resort to a Lipschitz bound of the form $\psi(f) - \psi(f^*) \le \norm{\psi}_{\mr{lip}}|f-f^*|$ to make the necessary comparison. Thus, we have upper bounded the first derivative uniformly by $\norm{\psi}_{\mr{lip}}$ and lower bounded the second derivative uniformly by $\a$. 

It turns out that exp-concave losses possess an offset term of the form $|\psi(f) - \psi(f^*)|^2$. Thus, \emph{localization naturally occurs at the same scale as the risk increments}.  Exp-concavity only requires the ratio $\psi''/(\psi')^2$ to stay bounded, which is crucial for the log loss, where $\norm{\psi}_{\mr{lip}} = \infty$ but $\psi''/(\psi')^2=1$. First, we'll establish a particular version $(\mu,d)$-convexity for the logarithmic loss. We will subsequently extend it to all exp-concave losses. 
\begin{lemma} \label{log-lemma}
    For all $x,y \in [\d,1]$
    \begin{equation} (- \ln x) - ( - \ln y) - \nabla (-\ln)|_y(x-y) \ge  \f{|\ln x - \ln y|^2}{2\ln(1/\d) \vee 4}. \label{log-ineq}
    \end{equation}
  \end{lemma}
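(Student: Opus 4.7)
The plan is to reduce \eqref{log-ineq} to a univariate inequality via the substitution $u = \ln(x/y)$. Since $\nabla(-\ln)|_y = -1/y$, the left-hand side simplifies to
\[
\ln(y/x) + (x-y)/y \;=\; e^{u} - u - 1.
\]
Writing $L = \ln(1/\delta)$, the constraint $x,y \in [\delta,1]$ forces $u \in [-L,L]$, and the inequality becomes
\[
e^{u} - u - 1 \;\ge\; \frac{u^2}{2L \vee 4}.
\]
The case $u \ge 0$ is immediate from the Taylor series $e^{u}-u-1 = \sum_{k \ge 2} u^k/k! \ge u^2/2$, since $2L \vee 4 \ge 2$.

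The substantive case is $u < 0$. Setting $v = -u \in (0, L]$, I need $e^{-v} + v - 1 \ge v^2/(2L \vee 4)$. My key intermediate step is to show that the ratio $h(v) := (e^{-v} + v - 1)/v^2$ is \emph{monotonically decreasing} on $(0,\infty)$. A direct differentiation gives $h'(v) = v^{-3}\, r(v)$ where $r(v) = 2 - v - (v+2)e^{-v}$; noting $r(0) = 0$, $r'(0) = 0$, and $r''(v) = -v e^{-v} \le 0$ shows $r \le 0$, hence $h' \le 0$.

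Given monotonicity, it suffices to verify the bound at the largest admissible value of $v$. I split according to how $2L \vee 4$ behaves. If $L \ge 2$, then $2L \vee 4 = 2L$, and
\[
h(L) \;\ge\; \frac{L-1}{L^2} \;\ge\; \frac{1}{2L},
\]
where the last step uses $2(L-1) \ge L \iff L \ge 2$. If $L \le 2$, then $2L \vee 4 = 4$, and for any $v \in (0,L] \subseteq (0,2]$, monotonicity yields $h(v) \ge h(2) = (1 + e^{-2})/4 > 1/4$.

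I expect the monotonicity lemma to be the only step requiring real care; once that is established, the reconciliation of the two pieces of the denominator $2L \vee 4$ is a short arithmetic case split, and the $u \ge 0$ branch is essentially free from the Taylor series. The whole argument is elementary and should be packaged into less than a page.
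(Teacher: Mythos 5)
Your proposal is correct and takes essentially the same route as the paper: the identical reduction of the left-hand side to the univariate quantity $e^{-v}+v-1$ (your $u$ is the negative of the paper's $z$), with the nonnegative branch handled by the Taylor series and the other branch by monotonicity of the ratio $(e^{-v}+v-1)/v^2$ together with a check around the value $2$, exactly as in the paper's Lemma \ref{lem:log-margin}. The only difference is cosmetic and in your favor: you actually prove the monotonicity claim by differentiating and sign-checking $r(v)=2-v-(v+2)e^{-v}$, where the paper merely asserts it can be checked, and your case split on $L\ge 2$ versus $L\le 2$ replaces the paper's equivalent split on $z$.
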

  \begin{proof}
    The left-hand side of \eqref{log-ineq} is equivalent to $ \ln(y/x) + (x-y)/y = e^{-z} - 1 + z$ with the substitution $z \defeq \ln(y) - \ln(x)$. Finally, we verify in Lemma \ref{lem:log-margin} that \begin{equation}\label{eq:exact-log-modulus}e^{-z} - 1 + z \ge \frac{z^2}{2\ln(1/\d) \vee 4},\end{equation} since our assumption $x,y \in [\d, 1]$ implies $|z| \le \ln(1/\d)$.  
   \end{proof}
   The inequality \eqref{log-ineq} establishes that the logarithmic loss, when restricted to $[\d,1]$, is $(\mu,d)$-convex for $\mu: x \mapsto x^2/(2\ln(1/\d) \vee 4)$ and $d(x,y) = |\ln x - \ln y|$. This choice of $d$ captures an important fact: the logarithm is \emph{more convex} precisely when it is \emph{faster varying}. 
   
  Remarkably, a simple reduction shows that the same property is enjoyed by bounded, $\eta$-exp-concave losses. Thus, we neatly recover past results on exp-concave learning (cf. \citet{mehta_fast_2017}), and extend them to the improper setting. 
\begin{proposition} \label{prop:exp-conc-ineq} Suppose $\psi$ is an $\eta$-exp-concave loss function which satisfies the uniform bound $0 \le \psi \le m$. Then, for each $x$ and $y$,
    \begin{equation}
      \psi(x) - \psi(y) - \bk{\nabla \psi|_y}{x-y} \ge \f{|\psi(x) - \psi(y)|^2}{2m \vee (4/\eta)}. \label{exp-conc-ineq}
    \end{equation}
    \end{proposition}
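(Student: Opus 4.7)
The plan is to reduce the statement to Lemma \ref{log-lemma} by exploiting the definition of exp-concavity. Setting $g = e^{-\eta\psi}$, the assumption that $\psi$ is $\eta$-exp-concave means precisely that $g$ is concave, and the uniform bound $0 \le \psi \le m$ translates to $g(x), g(y) \in [e^{-\eta m}, 1]$ for all $x,y$. Since $\psi = -\eta^{-1}\ln g$, the curvature of $\psi$ in the sense of the target inequality should inherit directly from the curvature of $-\ln$ established in Lemma \ref{log-lemma}.

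First I would apply Lemma \ref{log-lemma} with arguments $a \equiv g(x)$ and $b \equiv g(y)$ and with $\d \equiv e^{-\eta m}$, noting $\ln(1/\d) = \eta m$. This gives
\[(-\ln g(x)) - (-\ln g(y)) - \nabla(-\ln)|_{g(y)}\bigl(g(x) - g(y)\bigr) \ge \frac{|\ln g(x) - \ln g(y)|^2}{2\eta m \vee 4}.\]
Since $\ln g = -\eta \psi$, the right-hand side equals $\eta^2 |\psi(x) - \psi(y)|^2 / (2\eta m \vee 4)$. Dividing through by $\eta$, the left-hand side becomes
\[\psi(x) - \psi(y) + \frac{g(x) - g(y)}{\eta\, g(y)},\]
and the right-hand side becomes exactly $|\psi(x)-\psi(y)|^2 / (2m \vee (4/\eta))$, matching the target bound.

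Second, I would convert the extra term $\frac{g(x)-g(y)}{\eta g(y)}$ into something involving $\nabla\psi|_y$. By the chain rule, $\nabla g|_y = -\eta\, g(y)\,\nabla\psi|_y$, and by concavity of $g$,
\[g(x) - g(y) \le \bk{\nabla g|_y}{x - y} = -\eta\, g(y)\, \bk{\nabla \psi|_y}{x-y}.\]
Dividing by $\eta g(y) > 0$ yields $\frac{g(x)-g(y)}{\eta g(y)} \le -\bk{\nabla\psi|_y}{x-y}$. Substituting this into the inequality obtained from Lemma \ref{log-lemma} gives exactly \eqref{exp-conc-ineq}.

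There is no real obstacle here beyond bookkeeping; the content is the observation that $(\mu,d)$-convexity with the logarithmic metric transports through the exponential map. The main care point is verifying that the directions of the inequalities line up: Lemma \ref{log-lemma} uses the \emph{tangent} to $-\ln$ at $g(y)$, while we need the tangent to $\psi$ at $y$, and it is the concavity of $g$ (rather than convexity of $\psi$ per se) that lets us pass between them without loss in the constant.
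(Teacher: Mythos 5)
Your proposal is correct and follows essentially the same route as the paper: apply Lemma \ref{log-lemma} to $e^{-\eta\psi(x)}$, $e^{-\eta\psi(y)}$ with $\d = e^{-\eta m}$, and then control the resulting linear term by the first-order concavity inequality for $g = e^{-\eta\psi}$ together with the chain rule, which is exactly the content of the paper's gradient bound \eqref{eq:ec-grad}. The only difference is presentational (the paper phrases the gradient comparison as taking logarithms and passing to the sub-gradient at $\lambda=0$), so there is nothing to add.
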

    \begin{proof}
      First note that since $e^{-\eta\psi}$ is concave, we have
      \[\exp(-\eta\psi(\lambda x + (1-\lambda)y)) \ge \lambda e^{-\eta\psi(x)} + (1-\lambda)e^{-\eta\psi(y)}.\]
      Note that equality holds at $\lambda = 0$. Taking logarithms of both sides and passing to the upper sub-gradient at $\lambda = 0$ therefore yields
      \begin{equation}\label{eq:ec-grad}-\eta\bk{\nabla\psi|_y}{x-y} \ge \bk{\nabla \ln|_{e^{-\eta \psi(y)}}}{e^{-\eta\psi( x)} - e^{-\eta\psi(y)}}.\end{equation}
    We then apply Lemma \ref{log-lemma} with $x$ and $y$ replaced by $e^{-\eta\psi(x)}$ and $e^{-\eta\psi(y)}$, and with $\d = e^{-\eta m}$. Using \eqref{eq:ec-grad} to bound the gradient term then yields Proposition \ref{prop:exp-conc-ineq}.
    \end{proof}
    Plugging \eqref{exp-conc-ineq} into Proposition \ref{thm:offset-basic} and making a contraction argument yields the following upper bound for the excess $\psi$-risk. The full argument is in Appendix \ref{apx:offset-exp}.
        \begin{restatable}{theorem}{contraction} \label{thm:offset-exp}
            Let $\psi$ be an $\eta$-exp-concave loss function taking values in $[0,m]$. Then the star estimator in $\c F$ satisfies the excess risk bound
            \begin{equation}
                \bb{E}\Psi(\c E(\tilde f))\le \bb{E}\Psi\left( \sup_{f, g \in \c F'} \left\{ \frac{1}{n} \sum_{i=1}^n 4\ep_i'(\psi_i(f) - \psi_i(g)) - \frac{\eta(\psi_i(f) - \psi( g_i))^2}{18m\eta \vee 36} \right\}\right). \label{eq:offset-exp} 
            \end{equation}
        where $\Psi$ is any increasing, convex function and $\c F' = \cup_{\lambda \in [0,1]} \lambda \c F + (1-\lambda) \c F$. Alternatively, when $\psi$ is $p$-uniformly convex with modulus $\a$ and $\norm{\psi}_{\mr{lip}}$-Lipschitz, we have
        \begin{equation}
                \bb{E}\Psi(\c E(\tilde f))\le \bb{E}\Psi\left( \sup_{f, g \in \c F'} \left\{ \frac{1}{n} \sum_{i=1}^n 4\norm{\psi}_{\mr{lip}}(f_i - g_i)\ep'_i - \frac{\a|f_i - g_i|^p}{3^p} \right\}\right). \label{eq:offset-unif} 
        \end{equation}
    \end{restatable}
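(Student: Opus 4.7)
The plan is to specialize Proposition~\ref{thm:offset-basic} to each curvature assumption and, in the $p$-uniformly convex case, peel the nonlinearity off the Rademacher term by a contraction argument.

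For the exp-concave bound \eqref{eq:offset-exp}, I would first invoke Proposition~\ref{prop:exp-conc-ineq}, which says that a bounded $\eta$-exp-concave loss is $(\mu,d)$-convex with $\mu(z)=z^2/(2m\vee 4/\eta)$ and $d(x,y)=|\psi(x)-\psi(y)|$. Plugging this pair into Proposition~\ref{thm:offset-basic} turns the offset inside the integrand into
\[
\mu(d(f_i,f^*_i)/3) = \frac{(\psi(f_i)-\psi(f^*_i))^2}{9\cdot(2m\vee 4/\eta)} = \frac{\eta(\psi(f_i)-\psi(f^*_i))^2}{18m\eta\vee 36},
\]
after multiplying numerator and denominator by $\eta$. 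Since $f^*\in \c F \subseteq \c F'$, relaxing the supremum to range additionally over $g\in\c F'$ in place of $f^*$ only enlarges the integrand, and monotonicity of $\Psi$ then yields \eqref{eq:offset-exp}.

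For the $p$-uniformly convex bound \eqref{eq:offset-unif}, the assumption corresponds to $\mu(z)=\a z^p$ and $d(x,y)=|x-y|$, so Proposition~\ref{thm:offset-basic} directly produces a bound whose offset is $\a|f_i-f^*_i|^p/3^p$ and whose Rademacher term is $4\ep'_i(\psi(f_i)-\psi(f^*_i))$. Conditional on the sample, the map $u\mapsto \psi(u,Y_i)$ is $\norm{\psi}_{\mr{lip}}$-Lipschitz, and the offset depends only on the pointwise evaluations $(f(X_i),f^*(X_i))$. A Ledoux-Talagrand contraction applied one Rademacher coordinate at a time inside the supremum then replaces $\ep'_i(\psi(f_i)-\psi(f^*_i))$ by $\norm{\psi}_{\mr{lip}}\ep'_i(f_i-f^*_i)$ without changing the deterministic offset, and enlarging the supremum to include $g\in\c F'$ as before gives \eqref{eq:offset-unif}.

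The main obstacle is justifying the contraction step, since the standard Ledoux-Talagrand inequality is usually stated without a deterministic penalty and with the outer expectation and supremum in a different arrangement. One routes around this by recalling that the standard proof reduces to a one-dimensional comparison between $\ep\phi(t)$ and $L\ep t$ at each coordinate; conditioning on the remaining Rademacher variables and noting that the offset is a deterministic function of the parameters being optimized over allows this comparison to go through coordinate-wise. Composing with the convex, increasing $\Psi$ and invoking Jensen's inequality as one pushes through the outer expectation preserves the bound, as is standard for offset Rademacher processes.
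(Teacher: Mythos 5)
Your proposal is correct and follows essentially the same route as the paper: enlarge the supremum in Proposition~\ref{thm:offset-basic} from $(f,f^*)$ to pairs in $\c F'$, substitute the exp-concave modulus from Proposition~\ref{prop:exp-conc-ineq} for \eqref{eq:offset-exp}, and use a penalized contraction for \eqref{eq:offset-unif}. The coordinate-wise contraction argument you sketch to handle the deterministic offset is exactly what the paper formalizes as Lemma~\ref{lem:offset-cont}.
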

  \begin{remark}
    An especially nice feature of \eqref{eq:offset-exp}, as well as \eqref{eq:offset-conc} below, is that all quantities appearing in the bound are invariant to affine re-parameterizations of the learning problem. 
  \end{remark}
  \subsubsection*{Self-concordance}
  We close the section by considering \emph{self-concordant} losses $\psi$, namely those for which
  \[\psi''' \le 2(\psi'')^{\f 3 2}.\] In order to see how localization for self-concordant losses can be fit into the above framework, we rely on the following inequality which was also used by \cite{bilodeau_tight_2020}.
  \begin{lemma}[{\citet[Theorem 4.1.7]{10.5555/2670022}}]\label{lem:sc-margin} Suppose $\psi$ is self-concordant, and define the \emph{local norm} $\norm{z}_{\psi,w} \defeq \sqrt{z^2\psi''(w)}$. Then
    \begin{equation} \psi(x) - \psi(y) - \bk{\nabla f|_y}{x-y} \ge \norm{x-y}_{\psi,y} - \ln\left(1 + \norm{x-y}_{\psi,y}\right). \label{eq:self-conc-margin}
    \end{equation}
  \end{lemma}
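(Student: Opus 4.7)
The plan is to reduce to a one-dimensional statement by restricting $\psi$ to the segment from $y$ to $x$, and then to integrate, twice, a differential inequality for $\phi''(t)^{-1/2}$ that comes directly from self-concordance. This is the classical argument from Nesterov's textbook, and I would reproduce its essential steps.

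I would set $\phi(t) = \psi(y + t(x-y))$ for $t \in [0,1]$ and observe by the chain rule that $|\phi'''(t)| \le 2\phi''(t)^{3/2}$ follows from the two-sided self-concordance bound (I read the paper's condition $\psi''' \le 2(\psi'')^{3/2}$ as the standard absolute-value version $|\psi'''| \le 2(\psi'')^{3/2}$, which is what one actually needs here). Writing $r \equiv \norm{x-y}_{\psi,y}$, we have $\phi''(0) = r^2$ and the desired inequality reduces to
\[\phi(1) - \phi(0) - \phi'(0) \ge r - \ln(1+r).\]

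The key identity is $\frac{d}{dt}\phi''(t)^{-1/2} = -\tfrac{1}{2}\phi''(t)^{-3/2}\phi'''(t)$, which by self-concordance has absolute value at most $1$. Integrating this Lipschitz bound from $0$ to $t$ yields $\phi''(t)^{-1/2} \le r^{-1} + t$, equivalently
\[\phi''(t) \ge \frac{r^2}{(1+rt)^2}\]
on $[0,1]$. Integrating once more from $0$ to $t$ gives $\phi'(t) - \phi'(0) \ge r^2 t/(1+rt)$, and a final integration from $0$ to $1$, carried out in closed form by the substitution $u = 1+rt$, yields exactly $r - \ln(1+r)$. The telescoping form $r - \ln(1+r)$ is essentially forced by the $1/(1+rt)^2$ shape of the lower bound on $\phi''$.

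The main (and fairly minor) obstacle is to verify that $\phi''$ stays positive on $[0,1]$, so that $\phi''(t)^{-1/2}$ is well defined and the rearrangement $\phi''(t)^{-1/2} \le r^{-1}+t \Rightarrow \phi''(t) \ge r^2/(1+rt)^2$ is valid. This holds automatically when $\psi$ is strictly convex on a convex open domain containing $x$ and $y$; in the applications of this paper the loss is defined on such a domain (or else $\psi$ acts as a barrier and the segment cannot exit it), so no further argument is needed, and the rest is a short calculation.
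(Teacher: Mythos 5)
Your argument is correct, but note that the paper does not prove this lemma at all: it is quoted directly from the cited source (Nesterov's Theorem 4.1.7), and your proof is precisely that textbook argument\textemdash restrict $\psi$ to the segment, bound $\bigl|\tfrac{d}{dt}\,\phi''(t)^{-1/2}\bigr| \le 1$ via self-concordance, and integrate twice to arrive at $r - \ln(1+r)$ with $r = \norm{x-y}_{\psi,y}$. Your two caveats (reading the paper's condition $\psi''' \le 2(\psi'')^{3/2}$ as the standard two-sided bound $|\psi'''| \le 2(\psi'')^{3/2}$, and verifying positivity of $\phi''$ along the segment, with the degenerate case $r=0$ trivial by convexity since the right-hand side vanishes) are exactly the right ones, so nothing is missing.
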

  The first thing that one should note is that, when one takes $\psi$ to be the negative logarithm, the above is exactly equivalent to the modulus \eqref{eq:exact-log-modulus} appearing in the proof of Lemma \ref{log-lemma}. Thus, {our upper bound for exp-concave losses is a consequence of self-concordance of the logarithm.}

  The path to general upper bounds via self-concordance is significantly more involved, so we will only sketch it. Applying \eqref{eq:self-conc-margin} to the excess risk and performing standard manipulations, one arrives at the following upper bound, proved in 
  Appendix \ref{prf:self-conc}.
  \begin{restatable}{prop}{selfconc}\label{prop:self-conc}
    If $\psi$ is a self-concordant loss and $\hat f$ is the empirical risk minimizer in a convex class $\c F$, then
  \begin{equation}\bb{E}\Psi(\c E(\tilde f)) \le \bb{E}\Psi\left(\sup_{f \in \c F'} \left\{ \frac{1}{n} \sum_{i=1}^n 4(\psi_i(f) - \psi(f_i^*))\ep'_i - \omega\!\left( \norm{f_i-f^*_i}_{\psi,f^*_i}\right)\right\}\right),\label{eq:offset-conc}
  \end{equation}
  for $\omega(z) = z - \log(1+z)$, $\norm{z}_{\psi,w} \defeq \sqrt{z^2\psi''(w)}$, and $(\ep'_i)_{i=1}^n$ are independent, symmetric Rademacher random variables and $\Psi$ is any increasing, convex function.
  \end{restatable} Comparing this to \eqref{eq:offset-unif}, the scale of the offset term has been improved from the modulus of strong convexity, which is the \emph{minimum} second derivative, to the second derivative {at the risk minimizer, $f^*$}.  Expressing the increments $\psi_i(f) - \psi_i(f^*)$ at a scale comparable to $\norm{f_i-f^*_i}_{\psi,f^*_i}$ is known to be achievable when $f$ belongs to the so-called \emph{Dikin ellipsoid} of $(\bb{E}_n\psi, f^*)$, requiring a more careful argument (see e.g. \citet{marteau-ferey_beyond_2019}). Notably, we do not know how to extend this bound to the improper setting due to the local nature of the offset term.

\section{Applications}\label{sec:applications}
We will now discuss several applications of Theorem \ref{thm:offset-exp}. First, we'll sketch how the previous section's results straightforwardly imply fast rates. We'll then discuss applications, first to the $p$-loss and then to generalized linear models. Proofs in both cases are to be found in Appendix \ref{prf:corr}.

While most general bound uses chaining, and is provided by Theorem \ref{thm:chaining} below,
the essence of the argument is simple; we will sketch it now. 
Let $S_\eps(\c A, \nu)$ be a minimal covering of $\c A$ in $L^2(\nu)$ at resolution $\eps$, and define the entropy numbers $H_2(\eps, \c G) \defeq \sup_{\nu} \ln(\# S_\eps(\c G, \nu))$ (where the supremum is over probability measures $\nu$).\footnote{Our results can be sharpened by restricting the supremum to empirical probability measures on $n$ data points. With further work, it is also possible to derive high-probability bounds where $\nu$ is restricted to be the empirical distribution of the observed data, see Remark \ref{rmk:data-dependent}.} Putting $S_\eps \defeq S_\eps(\psi \circ \c F', \bb P_n)$ and applying \eqref{eq:offset-exp} with $\Psi(z) = \exp(\lambda n z)$ gives
\begin{align*}
\bb{E}\exp(\lambda n (\c E(\tilde f)-\eps)) &\le \bb{E}\exp \lambda n \left( \sup_{s, t \in S_\eps} \left\{ \frac{1}{n} \sum_{i=1}^n 4\ep_i'(s_i - t_i) - \frac{\eta(s_i -  t_i)^2}{18m\eta \vee 36} \right\}\right) \\
&\le \bb{E} \sum_{s,t \in S_\eps} \bb{E}_{\ep} \exp \left(  \sum_{i=1}^n 4\lambda \ep_i'(s_i - t_i) - \frac{\lambda \eta(s_i -  t_i)^2}{18m\eta \vee 36} \right),\\
\intertext{where we first moved the supremum outside $\exp$ (by monotonicity of $\exp$) and then bounded it by a sum (by positivity of the arguments). Here, $\bb{E}_\ep$ denotes an expectation w.r.t. the multipliers $\ep'_i$, conditional upon the data. Applying Hoeffding's lemma to the inner expectations with respect to $\ep'_i$ gives}
&\le \bb{E} \sum_{s,t \in S_\eps} \exp \left(  \sum_{i=1}^n 2\lambda^2 (s_i - t_i)^2 - \frac{\lambda \eta(s_i -  t_i)^2}{18m\eta \vee 36} \right).
\end{align*}
Choosing $\lambda^* = 36m \vee 72/\eta$ gives
$\bb{E}\exp(\lambda^* n (\c E(\tilde f)-\eps)) \le \bb{E}(\#S_\eps)^2 \le \exp\{2H_2(\eps, \psi \circ \c F')\}$, so by a Chernoff bound 
\begin{equation}\bb{P}\left(\c E(\tilde f) \ge \eps + \left(36m \vee \frac{72}{\eta}\right)\left(\frac{2H_2(\eps, \psi \circ \c F') + \ln(1/\rho)}{n}\right)\right) \le \rho. \label{eq:packing-ub}
\end{equation}
The result \eqref{eq:packing-ub} extends \citet[Theorem 1]{mehta_fast_2017} to the non-convex and improper setting for the price of a factor $9$, and differs from the standard bound in numerous ways. Firstly, the Lipschitz constant $\norm{\psi}_{\mr{lip}}$ is replaced by the range, $m$. Here, $\norm{\psi}_{\mr{lip}}$  appears only implicitly in the covering numbers, entering logarithmically in small classes.
For the log loss, this is an exponential improvement. 

Next, the modulus of strong convexity is replaced by the exp-concavity modulus $\eta$. This implies that the optimal $1/n$ rate can be obtained, in both proper and improper settings, for losses which are not strongly convex but are exp-concave, notably regression with $p$-loss for $p > 2$. 

Lastly, we show in Appendix Lemma \ref{lem:star} that the entropy numbers of $\psi \circ \c F'$ are equivalent to those of $\psi \circ \c F$ in all but finite classes, where the gap is at most $\ln(1/\eps)$. On the other hand, when the entropy numbers of $\psi \circ \c F$ are polynomial, particularly $\eps^{-q}$ with $q < 2$, then they can be {polynomially smaller} than those of $\psi \circ (\conv \c F)$ \citep{carl_metric_1999}. 
\subsection{Regression with $p$-loss}\label{sec:p-loss}
A surprising consequence of \eqref{eq:packing-ub} concerns regression with the $p$-loss for $p > 2$, which is $p$-uniformly convex but \emph{not} strongly convex. When the inputs to the loss are bounded, the standard argument pursued by \citet{mendelson_improving_2002} and  \citet{bartlett_convexity_2006} (and also \citet{rakhlin_online_2015} in the online setting) exploits uniform convexity, producing the {suboptimal} rate exponent $p/(2p-2) < 1$. 
 However, the $p$-loss for inputs in $[-B,B]$ is $B^{-p}$-exp-concave, so we recover optimal $(1/n)$-type rates matching the square loss (cf. \citet{10.1214/08-AOS623}). Indeed, the suboptimal rates occur exactly when one plugs \eqref{eq:offset-unif} into the above argument instead of \eqref{eq:offset-exp}. 
 For $1 < p \le 2$, the loss is both strongly convex and exp-concave, so the two bounds coincide up to constants. 
\begin{restatable}[cf.{ \citet[Theorem 5.1]{mendelson_improving_2002}}]{cor}{ploss}\label{cor:ploss}
    Let $\psi(f,y) = |f-y|^p$ for $p > 1$ and let the class $\c F$ and response $Y$ take values in $[-B,B]$. Then there exists a universal $C(p,B) = O(p2^pB^p)$ such that the star estimator $\tilde f$ has excess $\psi$-risk bounded as 
    \begin{equation}\bb{P}\left(\c E(\tilde f, \c F) \ge \eps + \frac{C(p,B) (H_2(\eps/C_{p,B}, \c F) + \ln(1/\eps) + \ln(1/\rho))}{n}\right) \le \rho, \label{eq:p-loss-ub}
    \end{equation}
\end{restatable}
\noindent By a minor extension to \citet[Theorem 1]{tsybakov_optimal_2003}, this result is sharp for each $p$. 
\subsection{Improper maximum likelihood}\label{sec:glms}
We next derive upper bounds for learning under the log loss, $\psi: f \mapsto -\ln f$ in a class $\c L$. If we interpret $f$ as the likelihood of an observation, empirical risk minimization coincides with maximum likelihood. 

In this setting, a {generalized linear model} for multi-class prediction refers to the case where $\c L = \Set[(x,y) \mapsto \bk{\vp(f)}{ y}]{f \in \c F}$ for some base class $\c F$. Here the \emph{link function} $\vp$ outputs probability distributions over $[k]$, and $y \in \bb R^k$ is a basis vector encoding the label. 

These models are typically formulated as minimizing the loss $\psi^{[\vp]}: f \mapsto -\ln\bk {\vp(f)}{y}$ over $f \in \c F$, which is equivalent for proper learning. 
Since $\c L$ may be non-convex even for convex $\c F$, however, one must consider improper predictors $\hat f \not\in \c L$ in order to exploit localization of the log loss. When $\vp$ is surjective, these yield improper predictors under $\psi^{[\vp]}$ by composing with a right-inverse, $\vp^{\dagger}$.

To illustrate, we will derive our quoted bound \eqref{logistic-bound} for logistic regression. In this setting, \citet{pmlr-v35-hazan14a} showed that improper learning is necessary for fast rates with sub-exponential dependence on the predictor norm, $B$. 
This was achieved by \cite{foster_logistic_2018}, whose results fall neatly into the aforementioned recipe: they propose a mixed prediction corresponding to an exponentially weighted average, which is simply another estimator that takes values in the convex hull of $\c L$. 

For classes not bounded away from $0$, where the improvement is most dramatic, we need to replace $\c L$ by the $\delta$-regularization $\c L_\d = (1 -\d) \c L + \d$. The approximation error is controlled as follows.
\begin{restatable}[\citet{foster_logistic_2018}]{lem}{logapprox} \label{lem:reg-approx}
For all $f$ and $\d \in (0,1/2]$, the excess risk relative to $\c L_\d$ satisfies  
\[\c E(f; \c L_\d) \le \c E(f; \c L) + 2\d\]
\end{restatable}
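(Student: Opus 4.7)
The argument reduces the claim to a pointwise comparison of log-likelihoods and passes to the infimum. Unpacking the excess-risk definitions, the common term $R(f) = \bb E[-\ln f(Z)]$ cancels, and the claim is equivalent to $\inf_{\c L} R - \inf_{\c L_\d} R \le 2\d$, where $R(g) = \bb E[-\ln g(Z)]$. Since every $h \in \c L_\d$ has the form $h = (1-\d)g + \d$ for a unique $g \in \c L$, it suffices to establish, for each relevant $g \in \c L$, the pointwise-in-$g$ estimate $R(g) - R(g_\d) \le 2\d$; the infima bound then follows.

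To control $R(g) - R(g_\d)$, I would start from the identity
\[R(g) - R(g_\d) = \bb E\!\left[\ln\!\left(1 + \d\cdot\frac{1-g(Z)}{g(Z)}\right)\right]\]
and apply the elementary inequality $\ln(1+x)\le x$ to upper bound the integrand by $\d(1-g(Z))/g(Z)$. Since $g$ is a predicted likelihood in $[0,1]$, a case split on $\{g(Z) \ge 1/2\}$ versus its complement, together with $\d \le 1/2$ (so that $-\ln(1-\d) \le 2\d$ absorbs the slack from regularization), should reduce the remaining discrepancy to something controlled by the $2\d$ budget. Taking expectations against the data distribution and using that $\sum_y \eta_y(X) = 1$ on the label side eliminates the factor depending on class count.

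The main obstacle is the tail region where $g(Z)$ is close to $0$: there, the pointwise ratio $(1-g)/g$ is unbounded, and the naive estimate breaks. The resolution is that only $g$ corresponding to approximate minimizers of $R$ over $\c L_\d$ matter for the infimum comparison. For such $g$, the bound $g_\d \ge \d$ combined with the relation $g = (g_\d - \d)/(1-\d)$ forces $g$ to inherit a lower bound on typical high-probability events, and the remaining low-probability events contribute a vanishing amount because $\eta$ integrates to $1$. Passing to infima over approximate minimizers then delivers $\inf_\c L R \le \inf_{\c L_\d} R + 2\d$ and hence the stated bound.
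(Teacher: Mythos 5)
Your proposal attacks the wrong pointwise estimate, and the obstacle you flag at the end is genuine rather than circumventable. Your reduction of the claim to $\inf_{\c L} R - \inf_{\c L_\d} R \le 2\d$ is faithful to the displayed statement, and the sufficient condition you then try to establish, $R(g) - R(g_\d) \le 2\d$ for all $g \in \c L$, would indeed imply it. But that condition is false: since $g_\d = (1-\d)g + \d \ge g$ when $g \in [0,1]$, the quantity $R(g) - R(g_\d) = \bb E\bigl[\ln\bigl(1 + \d(1-g)/g\bigr)\bigr]$ is nonnegative and diverges when $g$ is small on a set of fixed positive measure, exactly as you note. The proposed repair does not work: $g_\d \ge \d$ and $g = (g_\d - \d)/(1-\d)$ hold for \emph{every} $g \ge 0$ and impose no lower bound on $g$, and ``approximate minimizers of $R$ over $\c L_\d$'' control $R(g_\d)$, not $R(g)$. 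A one-element counterexample settles it: take $Z$ uniform on $\{1,2\}$, $\c L = \{g\}$ with $g(1)=0.5$, $g(2)=0.01$, and $\d = 0.1$; then $\inf_{\c L} R - \inf_{\c L_\d} R = R(g) - R(g_\d) \approx 1.24 \gg 0.2 = 2\d$. So the inequality you are trying to prove---read literally with $\c E(\cdot;\c L_\d)$ on the left---is simply not true.

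The displayed statement appears to have its two arguments transposed. What is true, and what is actually invoked in Corollary~\ref{thm:logistic-rough}, is the opposite inequality $\c E(f;\c L) \le \c E(f;\c L_\d) + 2\d$, equivalently $\inf_{\c L_\d} R \le \inf_{\c L} R + 2\d$, and this \emph{does} reduce to a pointwise bound in the direction opposite to yours. The relevant pointwise observation is that regularization can only increase the log loss by a little:
\[
-\ln\bigl((1-\d)f + \d\bigr) - \bigl(-\ln f\bigr) \;=\; \ln\frac{f}{(1-\d)f + \d} \;\le\; 0 \vee \ln\frac{1}{1-\d} \;\le\; 2\d
\]
for $\d \in (0,1/2]$, where the first branch covers $f \in [0,1]$ (regularization never increases the loss there) and the second covers $f > 1$. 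Averaging gives $R(g_\d) \le R(g) + 2\d$ for every $g \in \c L$, and taking infima over $g$ yields $\inf_{\c L_\d} R \le \inf_{\c L} R + 2\d$. You should replace the $\ln(1+x)\le x$ computation with this one-sided bound, which is essentially the argument in the paper once its sign typos are resolved.
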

 
\noindent We now state our result.
\begin{restatable}{cor}{logisticrough}\label{thm:logistic-rough} 
    With probability at least $1-\rho$, the star estimator $
     \tilde f_\d$ in $\c L_\d$ satisfies 
    \begin{equation}
    \c E(\tilde f_\d; \c L) \le \eps + 2\d + C\ln(1/\d)\left(\frac{H_2(\d\eps, \c L)  + \ln(1/\eps\d) + \ln(1/\rho)}{n}\right)
    \label{rel-ent-rough}
    \end{equation}
    Let $\c L$ be the generalized linear model corresponding to \[\c F_B = \Set[x \mapsto Wx]{W \in \bb{R}^{k \times q}, \norm{W}_{2 \to \infty} \le B}\] with $A$-Lipschitz, surjective link $\vp$ and features $X \in \bb{R}^q$ that satisfy $\norm{X}_2 \le R\sqrt q$. Then with probability at least $1 - \rho$
\begin{equation}
  \c E(\vp^\dagger \circ \tilde f_\d; \c F_B) \le
    \frac{\ln(n)}{n}\left\{Ckq\ln(ABRn\sqrt{k}) + \ln(1/\rho)\right\}.\label{glm-rough}
\end{equation}
\end{restatable}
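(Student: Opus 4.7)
The plan is to first apply the packing bound \eqref{eq:packing-ub} to the log loss on the regularized class $\c L_\d$, then convert covering numbers back to $\c L$ and transfer the excess risk via Lemma~\ref{lem:reg-approx}; finally, for \eqref{glm-rough} I would plug in standard entropy estimates for the linear class $\c F_B$.

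For \eqref{rel-ent-rough}: every $g \in \c L_\d$ takes values in $[\d, 1]$, so $\psi = -\ln$ is bounded with $m = \ln(1/\d)$, and since $\psi''/(\psi')^2 = 1$ the exp-concavity modulus is $\eta = 1$. Carrying these constants through the chaining argument preceding \eqref{eq:packing-ub} yields
\[ \bb P\!\left(\c E(\tilde f_\d; \c L_\d) \ge \eps + C\ln(1/\d)\cdot \frac{H_2(\eps, \psi \circ \c L'_\d) + \ln(1/\rho)}{n}\right) \le \rho. \]
To express the entropy in terms of $\c L$, I would chain three observations: $-\ln$ is $(1/\d)$-Lipschitz on $[\d,1]$, so $H_2(\eps, \psi \circ \c L_\d) \le H_2(\d\eps, \c L_\d)$; the affine map $f \mapsto (1-\d)f + \d$ is a contraction, so $H_2(\d\eps, \c L_\d) \le H_2(\d\eps, \c L)$; and passing to the Minkowski sum $\c L'_\d$ costs an extra additive $\ln(1/\eps)$, as noted after \eqref{eq:packing-ub}. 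Finally, Lemma~\ref{lem:reg-approx} transfers excess risk from $\c L_\d$ to $\c L$ at additive cost $2\d$, yielding \eqref{rel-ent-rough}.

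For \eqref{glm-rough}, I would combine \eqref{rel-ent-rough} with standard covering estimates for the linear class: for bounded data, a Maurey/Dudley argument gives $H_2(\eta, \c F_B) \lesssim kd\,\ln(Bn/\eta)$. Composing with the $A$-Lipschitz link $\vp$ and pairing with the basis-vector label $Y$ is non-expansive in $L^2$, so $H_2(\d\eps, \c L) \lesssim kd\,\ln(ABn/(\d\eps))$. Substituting into \eqref{rel-ent-rough} and choosing $\eps = \d = 1/n$ makes $\ln(1/\d) = \ln n$ and $\ln(1/(\d\eps)) = 2\ln n$, collapsing all logarithms into $\ln(ABn)^2$ and absorbing the additive $\eps + 2\d = 3/n$ into the leading rate. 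Since the star-hull construction produces $\tilde f_\d$ as a convex combination of elements of the form $\bk{\vp(WX)}{Y}$ and a constant, it has the form $(X,Y) \mapsto \bk{q(X)}{Y}$ for a probability vector $q(X)$; surjectivity of $\vp$ then makes $\vp^\dagger \circ q$ well-defined and recovers the desired distribution-valued improper predictor.

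The main obstacle will be threading all of the Lipschitz and scaling factors correctly so that a clean $\ln(ABn)^2$ emerges: the $1/\d$ blow-up of the log loss, the $A$-factor from $\vp$, and the $B$-factor from $\c F_B$ must combine at resolution $\d\eps$ to yield polynomial covering, and the two-scale choice $\eps = \d = 1/n$ is tuned precisely to balance them. A secondary subtlety is identifying $\tilde f_\d$ as a probability-valued predictor before applying $\vp^\dagger$, which is why the surjectivity assumption on $\vp$ is invoked in the statement.
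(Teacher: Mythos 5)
Your proposal is correct and follows essentially the same route as the paper: apply the general bound \eqref{eq:packing-ub} to the log loss on $\c L_\d$ with $m = \ln(1/\d)$ and $\eta = 1$, reduce the entropy of $-\ln \circ \c L'_\d$ to that of $\c L$ via the $(1/\d)$-Lipschitz bound plus the star/Minkowski-hull $\ln(1/\eps)$ correction, transfer the risk back to $\c L$ via Lemma \ref{lem:reg-approx}, and for \eqref{glm-rough} insert the standard $kd\ln(AB/(\d\eps))$ parametric covering estimate with $\eps = \d = 1/n$. Your additional remarks on surjectivity of $\vp$ and the probability-valued form of the star estimate are consistent with the paper's (terse) treatment and do not change the argument.
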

Our bound for logistic regression, quoted in the introduction, follows after noting that the soft-max function is $1$-Lipschitz and surjective.

Before investigating how these bounds translate to larger classes\textemdash namely those with $H_2(\eps, \c F)$ of order $\eps^{-q}$ for small $\eps$\textemdash we need to state the chaining version of our bound, proved in Appendix \ref{prf:chaining}. 
\begin{restatable}{theorem}{chaining}\label{thm:chaining}
Let $\psi$ be an $\eta$-exp-concave loss taking values in $[0,m]$. Then, with probability at least $1 - 9e^{-z}$, the star estimator $\tilde f$ applied to $(\psi, \c F)$ satisfies
  \begin{equation}\label{eq:general-ub}\c E(\tilde f) \le  \inf_{0 \le \a \le \gamma} \left\{ 4 \a +  
    \frac{10}{\sqrt n} \int_\a^\gamma \sqrt{H_2(s)}\,ds + \frac{2H_2(\gamma)}{cn} +  \frac{\gamma\sqrt{8\pi}}{\sqrt n} + \left(\frac{2}{cn} + \frac{\gamma\sqrt{8}}{\sqrt n}\right)z \right\},
  \end{equation}where $H_2(s) \defeq H_2(s, \psi \circ \c F')$ and $c = 36^{-1}(1/m \wedge \eta/2)$.
\end{restatable}
In classes with entropy $\eps^{-q}$, our results fall into two regimes. The first covers losses such as the logistic loss, for which $-\ln \circ \vp$ is $A$-Lipschitz. Here, the covering numbers of $-\ln \circ \c L_\d$ differ from those of $\c F$ by the constant $A^q$, and we match the optimal rates for the square loss \citep{rakhlin_empirical_2017}.

For general $\c L$, the covering numbers of $-\ln \circ \c L_\d$ may contain an extra factor $\d^{-q}$, leading to suboptimal rates. Suboptimality follows from combining the minimax regret analysis of \citet{bilodeau_tight_2020}, which produces the average regret $n^{-1/(1+q)}$, with a standard online-to-batch reduction. We note that the upper bound of that paper is non-constructive, whereas these upper rates come with an explicit algorithm. Tightening our analysis to close this gap, or proposing an algorithm that attains the correct rates, remains a significant open problem. 

\begin{restatable}{cor}{bigglm}\label{cor:big-glm}
Consider a generalized linear model with $A$-Lipschitz loss $f \mapsto -\ln\bk{\vp(f)}{y}$. Suppose the entropy numbers $H_2(\eps; \c F)$ are of order $\eps^{-q}$. Then, the regularized star estimator $\vp^\dagger \circ \tilde f_\delta$ with $\d = 1/n$ satisfies the rates appearing on the left. 

On the other hand, for an arbitrary class $\c L$ taking values in $[0,1]$ subject to the log loss, the regularized star estimator $\tilde f_{\d'}'$\textemdash for appropriately chosen $\d'$\textemdash attains the rates appearing on the right. 

Here, the symbol $\lesssim_\rho$ denotes an upper bound that holds with probability $1-\rho$, hiding universal constants and a multiplicative factor $\ln(1/\rho)$.
\begin{equation}
  \c E(\vp^\dagger \circ \tilde f_\delta) \lesssim_\rho 
  \begin{cases}
    A^q n^{-2/(2+q)}\ln(n) & q < 2 \\
    A^q n^{-1/2}\ln(n) & q = 2 \\
    A^q n^{-1/q}& q > 2
  \end{cases}
  \quad
    \c E(\tilde f_{\d'}') \lesssim_\rho 
    \begin{cases}
      n^{-1/(1+3q/2)} & q < 2 \\
      n^{-1/4}\ln(n) & q = 2 \\
      n^{-1/(2q)}& q > 2
    \end{cases}
\end{equation}
\end{restatable}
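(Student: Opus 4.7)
The plan is to apply the chaining bound of Theorem \ref{thm:chaining} with $\psi = -\ln$, which is $1$-exp-concave and takes values in $[0,\ln(1/\d)]$ on $[\d,1]$, to the regularized class, and then optimize the resulting Dudley-type expression in each of the three entropy regimes. The two cases of the corollary differ only through the constant in front of the entropy envelope.

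For the generalized linear model, the $A$-Lipschitz hypothesis on $f \mapsto -\ln\bk{\vp(f)}{y}$ carries over to $-\ln \circ \c L_\d$, since $z \mapsto -\ln((1-\d)z+\d)$ has smaller derivative than $-\ln z$ on $(0,1]$; hence $H_2(\eps, -\ln \circ \c L_\d) \le H_2(\eps/A, \c F) \lesssim A^q \eps^{-q}$. For the general case we instead exploit that $-\ln$ is $(1/\d')$-Lipschitz on $[\d',1]$, giving $H_2(\eps, -\ln \circ \c L_{\d'}) \le H_2(\d'\eps, \c L) \lesssim (\d'\eps)^{-q}$. Writing these uniformly as $H_2(s) \lesssim c\, s^{-q}$, Theorem \ref{thm:chaining} becomes
\[R(\a,\gamma) = 4\a + \frac{12\sqrt c}{\sqrt n}\int_\a^\gamma s^{-q/2}\,ds + \bigl(36\,\ln(1/\d) \vee 72\bigr)\,\frac{c\,\gamma^{-q}}{n}.\]

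We optimize $R$ in three regimes. For $0 < q < 2$ the integral converges at $\a = 0$; balancing the chaining term $\sqrt c\,\gamma^{1-q/2}/\sqrt n$ with the top-level term $c\gamma^{-q}/n$ gives $\gamma^{1+q/2} \sim \sqrt{c/n}$ and $R \lesssim (c/n)^{2/(2+q)}\cdot \mathrm{polylog}(n)$. For $q = 2$ the integral equals $\sqrt c\,\ln(\gamma/\a)$; taking $\a$ polynomially small in $n$ introduces a $\ln n$ factor, yielding $R \lesssim \sqrt{c/n}\,\ln n$. For $q > 2$ the integral is dominated by $\sqrt c\,\a^{1-q/2}$ near the origin; balancing $4\a$ with $\sqrt c\,\a^{1-q/2}/\sqrt n$ gives $\a^{q/2} \sim \sqrt{c/n}$, hence $R \lesssim (c/n)^{1/q}$ (with $\gamma$ chosen large enough to swallow the last term). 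Substituting $c \lesssim A^q$ and $\d = 1/n$ makes $\ln(1/\d) = \ln n$ and the $2\d$ approximation error from Lemma \ref{lem:reg-approx} of order $1/n$, which is negligible against the chaining rate; transporting the bound from $\c L_\d$ back to $\c F$ via the right-inverse $\vp^\dagger$ yields the left column. For the right column, $c \sim (\d')^{-q}$ makes the chaining rate of the form $(\d')^{-\mu_q} n^{-\nu_q}$; adding the $2\d'$ approximation error and solving $\d' \sim (\d')^{-\mu_q} n^{-\nu_q}$ in each regime produces the stated $n^{-1/(1+3q/2)}$, $n^{-1/4}\ln n$, and $n^{-1/(2q)}$ rates.

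The main obstacle is the careful bookkeeping of logarithmic factors, especially at $q = 2$, and the verification that regularization does not worsen the Lipschitz constant in the GLM case. A minor additional technicality is the gap $H_2(\cdot, \c F') \le H_2(\cdot, \c F) + \ln(1/\cdot)$ noted in Theorem \ref{thm:chaining}, which contributes only to the $\mathrm{polylog}(n)$ factors; the high-probability failure parameter $\rho$ in Theorem \ref{thm:chaining} is absorbed similarly by taking $\rho$ of order $\sqrt{\ln n}$.
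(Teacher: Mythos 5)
Your proposal is correct and follows essentially the same route as the paper: combine Theorem \ref{thm:chaining} with Lemma \ref{lem:reg-approx}, plug in the entropy estimates $H_2(\eps)\lesssim (A/\eps)^q$ (GLM case) and $(\d'\eps)^{-q}$ (general case) with $m=\ln(1/\d)$, $\eta=1$, and then balance $\a,\gamma,\d$ in the three regimes, which yields the same exponents and the same $q<2$, $q=2$, $q>2$ case analysis as the paper's proof. Your explicit check that regularization $z\mapsto-\ln((1-\d)z+\d)$ does not worsen the Lipschitz constant is a small point the paper leaves implicit, but it does not change the argument.
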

In closing, we would like to highlight the additional, unresolved problem of designing a computationally efficient implementation of the regularized star estimator\textemdash in particular for log-concave link functions $\vp$ where the first step is a convex program, such as logistic regression. This problem is left to future work.

\begin{ack}
  This work was partially funded by the MIT Jerry A. Hausman Graduate Dissertation Fellowship. We would like to thank Alexander Rakhlin, Victor Chernozhukov, and Anna Mikusheva for their extensive guidance and for many helpful conversations. We would also like to thank Stefan Stein and Claire Lazar Reich for their helpful feedback on the manuscript. All errors are, however, the fault of the author. 
\end{ack}
\clearpage

{\small
\bibliographystyle{plainnat}
\bibliography{paper}
}

\clearpage

\appendix

\addcontentsline{toc}{section}{Appendix} 
\part{Appendix} 
\parttoc 

\section{Proof of Proposition \ref{lemma-1-analog}}\label{prf:lemma-1-analog}

We recall Proposition \ref{lemma-1-analog}

\star* 

  \begin{figure}[h]
    \centering
  \includegraphics[height=2.5in]{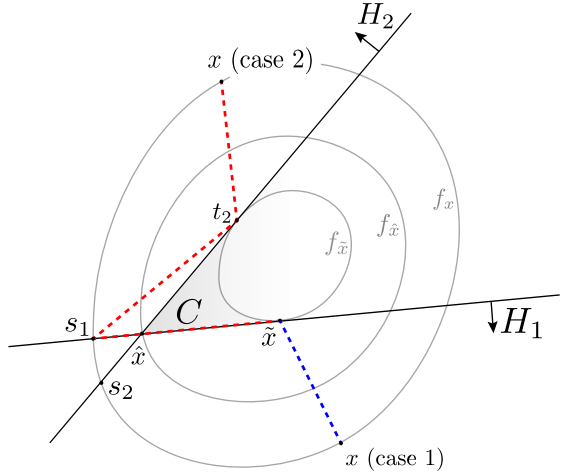}
  \caption{\label{fig3}Illustration of Proposition \ref{lemma-1-analog}.}
  \end{figure}
  \vspace{-.5cm}

  \begin{proof}
    The proof relies on the observation that if $\lambda \mapsto z_\lambda$ is a line segment such that $z_0 \in \partial f_x$ (i.e.~$f(z_0) = f(x)$), $z_1 \in \partial f_y$ (i.e.~$f(z_1) = f(y)$), and $z_\lambda \in f_x \setminus f_y^\circ$ (i.e.~$f(x) \ge f(z_\lambda) \ge f(y)$) for $\lambda \in [0,1]$, then 
    \[f(x) - f(y) = f(z_0) - f(z_1) \ge \mu(d(z_0,z_1)).\] This is seen by repeating the proof of Lemma \ref{convex-class}: since $z_\lambda \not\in f_y^\circ$ we must have $(f(z_\lambda) - f(z_1))/\lambda \ge 0$. Plugging in $z_\lambda = z_0 + \lambda(z_1-z_0)$ and taking the limit infimum as $\lambda \downarrow 0$ gives $0 \le \bk{\nabla f|_{z_1}}{z_0-z_1}$. By $(\mu,d)$-convexity of $f$, then, 
    \[f(z_1)- f(z_0) \ge f(z_1)- f(z_0) -  \bk{\nabla f|_{z_1}}{z_0-z_1} \ge \mu(d(z_1,z_0)).\]
    Finally, if $k$ such segments $z_\lambda$ form a path from $x$ to $y$, then at least one of them must have $d(z_1,z_0) \ge \f 1 k d(x,y)$. This is due to the triangle inequality for $d$.

    We now restrict our attention to the plane $P$ containing $(x, \hat x, \tilde x)$. Let $C$ be the minimal cone containing $f_{\tilde x}$ with vertex at $\hat x$.
     We note that, by optimality of $\tilde x$, no point $x \in S$ can lie in the interior of $C$. 
    
     Then $C \cap P$ is complementary to the union of two half-planes $H_1$ and $H_2$ with boundary lines $\ell_1$ and $\ell_2$, respectively. These lines intersect $\partial f_x$ at two respective points, $s_1$ and $s_2$, and are tangent to $f_{\tilde x}$ at two respective points, $t_1 \equiv \tilde x$ and $t_2$. Finally, $f_x \supseteq f_{\hat x}$ by optimality of $\hat x$. This is depicted in Figure \ref{fig3} above.
    
    There are two cases to consider. In the first case, $x \in H_1$. Then
    the line segment connecting $\tilde x$ and $x$ is contained entirely in $D = f_x \setminus f_{\tilde x}^\circ$. By the preceding discussion, 
    \[f(x) - f(\tilde x) \ge \mu(d(x, \tilde x)) \ge \mu\left(\textstyle \frac{1}{3} d(x,\tilde x)\right).\]
    In the second case $x \in H_2 \setminus H_1$. In this case, the segment from $\tilde x$ to $s_1$ along $\ell_1$ lies entirely in $D$.
  Similarly, the segments from $s_1$ to $t_2$ and from $t_2$ to $x$ are line segments contained in $D$.
  Each of these three segments connects $\partial f_{\tilde x}$ to $\partial f_x$, and they together form a path from $\tilde x$ to $x$. By the preceding discussion, 
    \[f(x) - f(\tilde x) \ge \mu\left(\textstyle \frac{1}{3} d(x,\tilde x)\right).\]
    This completes the proof.
    \end{proof}

\section{Proof of Proposition \ref{thm:offset-basic}}\label{prf:offset-basic}

We recall Proposition \ref{thm:offset-basic}:

\symmetrization*

\begin{proof} We'll work forwards from \eqref{ub-1}, which says that
    \[\c E(\tilde f) = \bb{E}\psi  (\tilde f_i) - \bb{E}\psi(f^*)_i 
    \le \sup_{f \in \c F'} \left\{ \beef (\bb{E}_n - \bb{E})(\psi ( f_i^*) - \psi (f_i) ) - \bb{E}_n\mu(d( f_i,f^*_i)/3) \right\}\]
Using the notation $\Delta_i(f) = \psi(f_i^*) - \psi(f)_i$, $\nu_i(f) = \f 1 2 \mu( \f 1 3 d(f_i,f^*_i))$, we can rewrite this as 
\begin{align*}
  \c E(\tilde f)
  & \le \sup_{f \in \c F'} \left\{ \frac{1}{n} \sum_{i=1}^n (1-\bb{E}) \Delta_i(f) - 2\nu_i(f)\right\}. \\
\intertext{Adding and subtracting $\bb{E}\nu_i(f)$ gives}
  &=  \sup_{f \in \c F'} \left\{ \frac{1}{n} \sum_{i=1}^n (1-\bb{E}) (\Delta_i(f) - \nu_i(f)) - (1+\bb{E})\nu_i(f)\right\}. \\
  \intertext{By applying $\bb{E}\Psi$ to both sides and applying Lemma \ref{lem:offset-symm} (a symmetrization result along the lines of \citet{liang_learning_2015}) with $A = \Delta$, $B = \nu$, and $T = \c F$, we get}
  &\le  \bb{E} \Psi \left( 2 \sup_{f \in \c F'} \left\{ \frac{1}{n} \sum_{i=1}^n \ep'_i(\Delta_i(f) - \nu_i(f)) - \nu_i(f) \right\} \right) \\
  &= \bb{E} \Psi \left(\sup_{f \in \c F'}\left\{ \frac{1}{n} \sum_{i=1}^n 2\ep_i'(\psi(f^*)_i - \psi(f)_i - \textstyle\f 1 2 \mu(\textstyle\f 1 3 d(f_i, f^*_i))) - \mu(\textstyle\f 1 3 d(f_i, f^*_i))\right\}\right), 
\end{align*} 
where we plugged in the definition of $\Delta_i$ and $\nu_i$. 
This completes the proof.
\end{proof}

\section{Proof of Theorem \ref{thm:offset-exp}}\label{apx:offset-exp}

We recall Theorem \ref{thm:offset-exp}:

\contraction*

\begin{proof} 
  We'll work forwards from \eqref{eq:offset-basic}, which says that 
  \begin{align*}
    \bb{E}\Psi(\c E(\tilde f)) 
    &\le \bb{E} \Psi \left(\sup_{f \in \c F'}\left\{ \frac{1}{n} \sum_{i=1}^n 2\ep_i'(\psi(f^*)_i - \psi(f)_i - \textstyle\f 1 2 \mu(\textstyle\f 1 3 d(f_i, f^*_i))) - \mu(\textstyle\f 1 3 d(f_i, f^*_i))\right\}\right) \\
    &\le \bb{E} \Psi \left(\sup_{f,g \in \c F'}\left\{ \frac{1}{n} \sum_{i=1}^n 2\ep_i'(\psi(g)_i - \psi(f)_i - \textstyle\f 1 2 \mu(\textstyle\f 1 3 d(f_i, g_i))) - \mu(\textstyle\f 1 3 d(f_i, g_i))\right\}\right),
  \end{align*}
  where the second step enlarges the domain in the supremum. For \eqref{eq:offset-exp}, we plug in the definition of the offset from Proposition \ref{prop:exp-conc-ineq}
  \[\mu(\textstyle{\frac{1}{3}} d(x,y)) = \displaystyle\frac{|\psi(x) - \psi(y)|^2}{18m \vee 36/\eta}.\] Under the condition that $\psi$ takes values in $[0,m]$, we have that 
  \[|\psi(x) - \psi(y)|^2 \le 2m |\psi(x) - \psi(y)| \le (2m \vee 4/\eta)|\psi(x) - \psi(y)|.\]
  It follows that we can apply our ``contraction lemma'' for offset processes, Lemma \ref{lem:offset-cont}, with the contractions 
  \begin{align*}
    & |\psi(x) - \psi(y) - \textstyle\f 1 2 \mu(\textstyle\f 1 3 d(x, y))|\\
    &\le |\psi(x) - \psi(y)| + |\textstyle\f 1 2 \mu(\textstyle\f 1 3 d(x, y))| \\
    &\le  |\psi(x) - \psi(y)| + \frac{(m \vee 2/\eta)|\psi(x) - \psi(y)|}{18m \vee 36/\eta} \le \f{19}{18}|\psi(x) - \psi(y)|.
  \end{align*}
  For \eqref{eq:offset-unif}, we first require the following lemma.
  \begin{lemma}\label{lem:lip}
    Let $\psi: \bb{R} \to \bb{R}$ be $(\mu,d)$-convex and $\norm{\psi}_{\mr{lip}}$-Lipschitz with respect to $d(x,y)=|x-y|$. Let $r \ge 0$ be the largest constant such that $\mu(cx) \le c^r \mu(x)$ for $c \le 1$ which is non-negative by monotonicity of $\mu$. Then
    \[\mu(\textstyle\f 1 3|x-y|)\le \left(\frac{2}{3}\right)^r\norm{\psi}_{\mr{lip}}|x-y|.\]
  \end{lemma}
  Applying the lemma with $r = p$, we simply apply the contractions \[|\psi(x) - \psi(y) - \textstyle \f 1 2 \mu(\textstyle \f 1 3 d(x,y))| \le |\psi(x) - \psi(y)| + \textstyle\f 1 2 \mu(\textstyle \f 1 3 d(x,y)) \le (1+{2^{p-1}}/{3^{p}})\norm{\psi}_{\mr{lip}} |x-y| \] using Lemma \ref{lem:offset-cont}, and then plug in 
  \[\mu(\textstyle\frac{1}{3}d(x,y)) = \displaystyle\frac{\a|x-y|^p}{3^p},\] from the definition of $p$-uniform convexity.
  \begin{proof}[Proof of Lemma \ref{lem:lip}]
    Let $z$ be the minimizer of $\psi$ over $[x,y]$. By Lemma \ref{convex-class} we have
    \begin{equation}
      \psi(x) - \psi(z) \ge \mu(|x-z|), \qquad \psi(y) - \psi(z) \ge \mu(|y-z|). \label{eq:lip-comparison}
    \end{equation}
  Since $|x-y|=|x-z|+|y-z|$ because $z \in [x,y]$, we have
  \begin{align*}
    \mu(\textstyle\f 1 3|x-y|) 
    &\le \mu(\textstyle\f 1 3(|x-z|+|y-z|)). 
    \intertext{By monotonicity of $\mu$, this is}
    &\le \mu(\textstyle\f 2 3(|x-z|\vee|y-z|)) \\
    &\le \left(\textstyle\f 2 3\right)^r\left\{\mu(|x-z|)\vee\mu(|y-z|)\right\}. \\
    \intertext{Using \eqref{eq:lip-comparison}, we have}
    &\le \left(\textstyle\f 2 3\right)^r |\psi(x)-\psi(z)| \vee \left(\textstyle\f 2 3\right)^r|\psi(y)-\psi(z)| \\
    &\le \left(\textstyle\f 2 3\right)^r\norm{\psi}_{\mr{lip}}|x-z| \vee \left(\textstyle\f 2 3\right)^r\norm{\psi}_{\mr{lip}}|y-z| \\
    &\le \left(\textstyle\f 2 3\right)^r\norm{\psi}_{\mr{lip}}|x-y|,
  \end{align*}
  where in the last step we again used that \[|x-y|=|x-z|+|y-z| \ge |x-z| \vee |y-z|\] by our choice of $z$. This completes the proof.
  \end{proof}

\end{proof}
\section{Proof of Proposition \ref{prop:self-conc}}\label{prf:self-conc}
We recall Proposition \ref{prop:self-conc}.
\selfconc*

\begin{proof}
  Combining the self-concordance inequality Lemma \ref{lem:sc-margin} with Lemmas \ref{convex-class} and \ref{empirical-convexity-lemma} immediately gives us 
  \begin{equation}\c E(\hat f) = \bb{E}\psi(\hat f) - \bb{E}\psi(f^*) \ge \bb{E}\omega(\|\hat f-f^*\|_{\psi,f^*})
  \end{equation}
  for the empirical risk minimizer $\hat f$ in a convex class. Since $\hat f$ is the risk minimizer, we also have $\bb{E}_n\psi(f^*)-\bb{E}_n\psi(\hat f)\ge 0$. Adding these two and rearranging, we have
  \begin{align*}
    \c E(\hat f) 
    &\le 2\c E(\hat f) - \bb{E}\omega(\|\hat f-f^*\|_{\psi,f^*}) \\
    &\le 2(\bb{E} - \bb{E}_n)\psi(\hat f) - 2(\bb{E} - \bb{E}_n)\psi(f^*) - \bb{E}\omega(\|\hat f-f^*\|_{\psi,f^*})\\
    &\le 2\sup_{f \in \c F}\left\{\beef (\bb{E} - \bb{E}_n)\psi(\hat f) - (\bb{E} - \bb{E}_n)\psi(f^*) -  \textstyle \f 1 2 \bb{E}\omega(\norm{f-f^*}_{\psi,f^*})\right\} \\
    \intertext{Applying $\bb{E}\Psi$ on both sides gives}
    \bb{E}\Psi(\c E(\hat f)) &\le \bb{E}\Psi\left(2\sup_{f \in \c F}\left\{\beef (\bb{E} - \bb{E}_n)\psi(\hat f) - (\bb{E} - \bb{E}_n)\psi(f^*) -  \textstyle \f 1 2 \bb{E}\omega(\norm{f-f^*}_{\psi,f^*})\right\} \right) \\
    &= \bb{E}\Psi\left(2\sup_{f \in \c F}\left\{\beef (\bb{E} - \bb{E}_n)(\psi(\hat f) - \psi(f^*)) - \textstyle \f 1 4(\bb E + \bb E)\omega(\norm{f-f^*}_{\psi,f^*})\right\} \right) \\
    \intertext{By Jensen's inequality, this is}
    &\le \bb{E}\Psi\left(2\sup_{f \in \c F}\left\{\beef (\bb{E} - \bb{E}_n)(\psi(\hat f) - \psi(f^*)) - \textstyle \f 1 4(1 + \bb E)\omega(\norm{f-f^*}_{\psi,f^*})\right\} \right).
  \end{align*}
  The proof is then complete after applying Lemma \ref{lem:offset-symm} with $A(f) = 2(\psi(f) - \psi(f^*))$, $T = \c F$, and $B(f) = \f 1 2 \omega(\norm{f-f^*}_{\psi,f^*})$.
\end{proof}
\section{Proof of Theorem \ref{thm:chaining}}\label{prf:chaining}
We recall Theorem \ref{thm:chaining}.
\chaining*
\begin{proof}
We work forwards from \eqref{eq:offset-exp}, which tells us
\begin{equation*}
  \bb{E}\Psi(\c E(\tilde f))\le \bb{E}\Psi\left(\sup_{t \in T} \left\{Z_t - cZ_t^2 \right\}\right), 
\end{equation*}
where we define $Z$, $t$, $T$, and $c$ according to \begin{align*}
  Z_t &=  \frac{1}{n} \sum_{i=1}^n 4\ep_i't(X_i, Y_i) =  \frac{1}{n} \sum_{i=1}^n 4\ep_i'(\psi(f(X_i),Y_i) - \psi(g(X_i),Y_i)), \\
  t &= \psi(f(X_i),Y_i) - \psi(g(X_i),Y_i),\\
  T &= \psi\circ \c F' - \psi\circ\c F', \\
  c &= \frac{1}{36}\left(\frac{1}{m} \vee \f{\eta}{2}\right).
\end{align*}
Let $V$ be a covering of $T$ at resolution $\gamma$ in $L^2(\bb P_n)$ that is chosen to include $0$, so that $\# V \le \exp(2H_2(\gamma))$ almost surely by construction of $T$ and definition of $H_2(-)$. 
Then we can choose $\pi: T \to V$ with the properties that (1) $\norm{t - \pi(t)}_{2, \bb P} \le \gamma$ uniformly over $t \in T$, and (2) $\pi(t) = 0$ if $\norm{t}_{2, \bb P} < \gamma$.

The proof will proceed in three lemmas which will be stated below and proved subsequently. The first lemma shows that $\sup_{t \in T} \left\{Z_t - cZ_t^2\right\}$ can be controlled in terms of (i) the local complexity of $(Z_t)_{t \in T}$ at scale $\gamma$ and (ii) the offset complexity of a finite approximation to $(Z_t)_{t \in T}$ at resolution $\gamma$. The second and third lemmas develop high-probability bounds for these two terms. 
\begin{lemma}[from {\citet[Lemma 6]{liang_learning_2015}}]\label{lem:chaining} It holds almost surely that
  \begin{equation}\label{eq:liang-decomp}\sup_{t\in T}\left\{\beef Z_t - cZ_t^2\right\} \le \sup_{t \in T} \left\{\beef Z_t - Z_{\pi(t)} \right\} + \sup_{v \in V} \left\{\beef Z_v - (c/4)Z_v^2 \right\}.\end{equation}
\end{lemma}
\begin{lemma}\label{lem:local-chaining-ub} 
 \begin{equation}\label{eq:local-chaining-ub} 
  \bb{P}\left(\sup_{t \in T} \left\{\beef Z_t - Z_{\pi(t)} \right\} \ge 4 \a +  
\frac{10}{\sqrt n} \int_\a^\gamma \sqrt{2H_2(s)}\,ds + \gamma \sqrt{\frac{8\pi}{n}} + x \right) \le 2e^{-nx^2/(8\gamma^2)}
 \end{equation}
\end{lemma}
\begin{lemma}\label{lem:finite-ub} 
  \begin{equation}\label{eq:finite-ub} 
    \bb{P}\left(\sup_{v \in V} \left\{\beef Z_v - (c/4)Z_v^2 \right\} > \frac{4H_2(\gamma) + 2x }{cn} \right) \le e^{-x}. 
  \end{equation}
 \end{lemma}
 Applying a union bound to the event in \eqref{eq:local-chaining-ub} with $x = z\gamma\sqrt 8$, the event in \eqref{eq:finite-ub} with $z=x$, and the complement of the event \eqref{eq:liang-decomp}, we obtain that with probability at least $1 - 3e^{-z}$
\[\sup_{t \in T} \left\{\beef Z_t - Z_{\pi(t)} \right\} \le 4 \a +  
\frac{10}{\sqrt n} \int_\a^\gamma \sqrt{H_2(s)}\,ds + \frac{2H_2(\gamma)}{cn} +  \frac{\gamma\sqrt{8\pi}}{\sqrt n} + \left(\frac{2}{cn} + \frac{\gamma\sqrt{8}}{\sqrt n}\right)z\]
Finally, since $\bb{E}\Psi(\c E(\tilde f))\le \bb{E}\Psi\left(\sup_{t \in T} \left\{Z_t - cZ_t^2 \right\}\right)$ for all convex and increasing $\Psi$, we can apply the following.
\begin{lemma}[{ \citet[Lemma 1]{panchenko_symmetrization_2003}}]
If $\bb{E}\Psi(X) \le \bb{E}\Psi(Y)$ for all convex and increasing functions $\Psi$, then
\[\bb{P}(Y \ge t) \le Ae^{-at} \implies \bb{P}(X \ge t) \le Ae^{1-at}.\]
\end{lemma}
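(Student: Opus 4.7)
The plan is to exploit the hypothesis $\bb{E}\Psi(X) \le \bb{E}\Psi(Y)$ by plugging in a one-parameter family of convex, increasing test functions that are ``rich enough'' to encode tail probabilities after a Markov step. The natural candidate is the truncated ramp $\Psi_s(x) = (x-s)_+$ for $s \in \bb{R}$, each of which is manifestly convex and nondecreasing, and which has the pleasant property that $\Psi_s(x) \ge (t-s)\mathbf{1}\{x \ge t\}$ for any $t > s$. This will allow Markov's inequality to convert a moment bound into a tail bound on $X$, while a layer-cake computation will let us bound the corresponding moment of $Y$ using the assumed exponential tail.

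Concretely, I would proceed in three steps. First, write $\bb{E}\Psi_s(Y) = \int_s^\infty \bb{P}(Y \ge u)\,du$ and substitute the hypothesis $\bb{P}(Y \ge u) \le A e^{-au}$ to obtain a clean bound of the form $\bb{E}\Psi_s(Y) \le A e^{-as}/a$. Second, invoke the hypothesis to transfer this to $X$: for any $t > s$,
\[
\bb{P}(X \ge t) \;\le\; \frac{\bb{E}\Psi_s(X)}{t-s} \;\le\; \frac{\bb{E}\Psi_s(Y)}{t-s} \;\le\; \frac{A e^{-as}}{a(t-s)}.
\]
Third, optimize the free parameter $s$. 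Differentiating the right-hand side in $s$ shows the optimum occurs at $s = t - 1/a$, at which point the factor $a(t-s) = 1$ cancels and the exponent becomes $-a(t - 1/a) = 1 - at$, yielding exactly $\bb{P}(X \ge t) \le A e^{1-at}$.

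There is essentially no obstacle beyond choosing the test family: once one commits to $\Psi_s(x) = (x-s)_+$, each step is a one-liner. The subtlety, if any, is recognizing why the ramp is the ``right'' test function — it is the simplest convex, increasing envelope of an indicator, so Markov's inequality gives a tail bound with no extraneous constants, and the single parameter $s$ gives just enough freedom to recover the sharp exponent $e^{1-at}$ (with the extra factor $e$ reflecting the slack between an indicator and its linear upper bound). One should also note that the argument only uses the hypothesis for the specific family $\{\Psi_s\}_{s \in \bb{R}}$, so in fact a slightly weaker assumption suffices, but the stated form is what is needed in the application.
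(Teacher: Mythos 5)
Your proof is correct, and each step is airtight: the layer-cake identity $\bb{E}(Y-s)_+ = \int_s^\infty \bb{P}(Y \ge u)\,du$, the Markov step $\bb{P}(X\ge t) \le (t-s)^{-1}\bb{E}(X-s)_+$, and the optimization giving $s = t - 1/a$ with $a(t-s)=1$ all check out, yielding exactly $A e^{1-at}$. One should note that the paper does not in fact supply a proof of this statement; it is cited verbatim from Panchenko's 2003 symmetrization paper. Your argument via the one-parameter ramp family $\Psi_s(x) = (x-s)_+$ and a single calculus optimization is the same device Panchenko uses, so you have reproduced the intended proof rather than found a genuinely different route. Your closing remark is also on target: the factor $e$ is precisely the cost of replacing the step indicator by its tightest linear majorant, and only the ramp family is actually needed, so the hypothesis ``for all convex increasing $\Psi$'' is stronger than necessary.
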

Thus, we have
\[\c E(\tilde f) \le  4 \a +  
\frac{10}{\sqrt n} \int_\a^\gamma \sqrt{H_2(s)}\,ds + \frac{2H_2(\gamma)}{cn} +  \frac{\gamma\sqrt{8\pi}}{\sqrt n} + \left(\frac{2}{cn} + \frac{\gamma\sqrt{8}}{\sqrt n}\right)z\]
with probability at least $1-(3e)e^{-z}$. After noting that $0 \le \a \le \gamma$ is arbitrary and $3e \le 9$, the proof is complete.
\end{proof}
\begin{proof}[Proof of Lemma \ref{lem:chaining}]
We have 
\begin{align*}
  \sup_{t\in T}\left\{\beef Z_t - cZ_t^2\right\} &= \sup_{t\in T}\left\{\beef (Z_t - Z_{\pi(t)}) +( (c/4) Z^2_{\pi(t)} - cZ^2_t)  - \left(cZ_t^2 +  Z_{\pi(t)} - (c/4) Z_{\pi(t)}^2\right)\right\} \\ &\le  \sup_{t \in T} \left\{\beef Z_t - Z_{\pi(t)} \right\} + \sup_{v \in V} \left\{\beef Z_v - (c/4)Z_v^2 \right\},
\end{align*}
provided we can show that the middle term $(c/4) Z^2_{\pi(t)} - cZ^2_t$ is a.s.~non-positive. To see this, note that either $\norm{t}_{2, \bb P_n} < \gamma$, in which case by construction $\pi(t) = 0$ and $Z_{\pi(t)}^2 = 0$, so we are done, or else we have
\[\norm{\pi(t)}_{2, \bb P_n} \le \norm{\pi(t) - t }_{2, \bb P_n} + \norm{t}_{2, \bb P_n} \le  \norm{t}_{2, \bb P_n} + \gamma \le 2  \norm{t}_{2, \bb P_n},\]
so that $ \norm{\pi(t)}_{2, \bb P_n}^2 \le 4 \norm{t}_{2, \bb P_n}^2.$
But, after plugging in the definition of $Z_t$, the middle term is precisely
\[\frac{16c}{n}\left(\frac{\norm{\pi(t)}_{2, \bb P_n}^2}{4} - \norm{t}_{2, \bb P_n}^2\right),\]
so we are done.
\end{proof}
\begin{proof}[Proof of Lemma \ref{lem:local-chaining-ub}]
Keeping in mind that $\norm{t - \pi(t)}_{2, \bb P_n} \le \gamma$ and applying the chaining result in \citet[Lemma A.3]{NIPS2010_76cf99d3} gives us
\begin{equation}
  \bb{E}_\ep \sup_{t \in T} \left\{\beef Z_t - Z_{\pi(t)} \right\} \le 4 \a +  
  \frac{10}{\sqrt n} \int_\a^\gamma \sqrt{2H_2(s)}\,ds\label{eq:rad-chain}
\end{equation}
almost surely with respect to the data, where we used that \[\ln N(s,T,L^2(\bb P_n)) \le 2\ln N(s, \psi \circ \c F', L^2(\bb P_n)) \le 2H_2(s)\] by definition of $T$ and the fact that $H_2(-)$ is an almost-sure bound on the logarithm of the $L^2(\bb P_n)$ covering numbers. It follows by applying \citet[Theorem 4.7]{ledoux_rademacher_1991} with $\sigma^2(X) = \gamma^2/n$ that 
\begin{equation}
  \bb{P}_\eps \left(\sup_{t \in T} \left\{\beef Z_t - Z_{\pi(t)} \right\} \ge M_\eps + x\right) \le 2e^{-nx^2/(8\gamma^2)},\label{eq:rad-cont}
\end{equation} where $\bb P_\ep$ denotes the probability with respect to the multipliers $\ep$ conditional upon the data and $M_\eps$ is a conditional median of $\sup_{t \in T} \left\{\beef Z_t - Z_{\pi(t)}\right\}$. Finally, we can deduce the upper bound
\begin{align}
  & \bb{E}_\ep \sup_{t \in T} \left\{\beef Z_t - Z_{\pi(t)} \right\}-M_\eps \nonumber\\
  & \le \bb{E}_\eps\left[\left(\sup_{t \in T} \left\{\beef Z_t - Z_{\pi(t)} \right\}-M_\eps\right)\mathbbm{1}\left\{\sup_{t \in T} \left\{\beef Z_t - Z_{\pi(t)} \right\}>M_\eps\right\}\right]\nonumber\\
  &= \int_0^\infty \bb{P}_\eps\left(\sup_{t \in T} \left\{\beef Z_t - Z_{\pi(t)} \right\}-M_\eps > t\right)dt\nonumber\\
  &\le \int_0^\infty 2e^{-nt^2/(8\gamma^2)}dt = \gamma \sqrt{\frac{8\pi}{n}}\label{eq:med-mean}
\end{align}
Finally, putting together \eqref{eq:rad-chain}, \eqref{eq:rad-cont} and \eqref{eq:med-mean} gives us 
\[\bb{P_\eps}\left(\sup_{t \in T} \left\{\beef Z_t - Z_{\pi(t)} \right\} \ge 4 \a +  
\frac{10}{\sqrt n} \int_\a^\gamma \sqrt{2H_2(s)}\,ds + \gamma \sqrt{\frac{8\pi}{n}} + x \right) \le 2e^{-nx^2/(8\gamma^2)}.\]
Since this conditional bound holds almost surely with respect to the data, we immediately deduce \eqref{eq:local-chaining-ub}.
\end{proof}
\begin{proof}[Proof of Lemma \ref{lem:finite-ub}]
Working conditionally upon the data, we can compute by applying Markov's inequality that
\begin{align*}
  \bb{P}_\ep \left(\sup_{v \in V} \left\{\beef Z_v - (c/4)Z_v^2 \right\} > t \right) &=
  \bb{P}_\ep \left(\exp\left(rn \sup_{v \in V} \left\{\beef Z_v - (c/4)Z_v^2 \right\}\right)  > e^{rn t}\right) \\
  &\le \bb{E}_\ep \exp\left(rn \sup_{v \in V} \left\{\beef Z_v - (c/4)Z_v^2 \right\}\right)e^{-rnt}.
\end{align*}
We can further compute that
\begin{align*}
  \bb{E}_\ep \exp\left(rn \sup_{v \in V} \left\{\beef Z_v - (c/4)Z_v^2 \right\}\right) &= \bb{E}_\ep \sup_{v \in V} \exp\left(\sum_{i=1}^n r \ep'_i v_i - (c/4)r v_i^2 \right) \\
 &\le \sum_{v \in V} \bb{E}_\ep\exp\left(\sum_{i=1}^n \frac{r^2 v^2_i}{2} -  \frac{crv_i^2}{4} \right),
\end{align*} by applying Hoeffding's lemma to each expectation with respect to the variables $\ep_i$. Taking $r = c/2$, this is precisely $\#V$. 
Thus, we have that 
\[\bb{P}_\ep \left(\sup_{v \in V} \left\{\beef Z_v - (c/4)Z_v^2 \right\} > t \right) \le \exp\left(\ln(\# V) - \frac{cnt}{2}\right).\] Since $\ln(\#V) \le 2H_2(\gamma)$ almost surely, we can deduce the unconditional bound 
\[\bb{P} \left(\sup_{v \in V} \left\{\beef Z_v - (c/4)Z_v^2 \right\} > t \right) \le \exp\left(2H_2(\gamma) - \frac{cnt}{2}\right).\]
Taking $t = (4H_2(\gamma) + 2x)/cn$ gives \eqref{eq:finite-ub}.
\end{proof}
\section{Proofs of Section \ref{sec:applications} Results} \label{prf:corr}
\subsection{Proof of Corollary \ref{cor:ploss}}
We state Corollary \ref{cor:ploss}.
\ploss*
\begin{proof}
This follows as a result of the more general bound \eqref{eq:packing-ub}, which says that
\[\bb{P}\left(\c E(\tilde f) \ge \eps + \left(36m \vee \frac{72}{\eta}\right)\left(\frac{H_2(\eps, \psi \circ \c F') + \ln(1/\rho)}{n}\right)\right) \le \rho.\]
In order to deduce \eqref{eq:p-loss-ub}, we need to bound the quantities $m$, $1/\eta$, and $H_2(\psi \circ \c F')$. For $m$, since $|f|, |y| \le B$, it must hold that $|f-y|^p \le 2^pB^p$. For $\eta$, we can compute that 
\[(\psi')^2/\psi'' = \frac{p^2z^{2p-2}}{p(p-1)z^{p-2}} \le \frac{pz^p}{p-1} \le \frac{p2^pB^p}{p-1}\]
for $z = |f-y| \le 2B$. Finally, we have $\norm{\psi}_{\mr{lip}} \le p2^pB^{p-1}$ by bounding the first derivative, so that we have the entropy estimates
\[H_2(\eps, \psi \circ \c F') \le H_2\left(\frac{\eps}{p2^pB^{p-1}}, \c F'\right) \le 2H_2\left(\frac{\eps}{p2^{p+1}B^{p-1}}, \c F\right) + \ln\left(\frac{4 B}{\eps}\right),\] where the last step follows by applying Lemma \ref{lem:star} with $R \le \sup_{f,y}|f-y| \le 2B$.
\end{proof}
\subsection{Proof of Lemma \ref{lem:reg-approx}}
We recall Lemma \ref{lem:reg-approx}.
\logapprox*
\begin{proof}
    We can compute that
    \begin{equation}
       \ln(f) -  \ln((1-\d)f+\d)
        = \ln\left(\frac{f}{(1-\d)f+\d}\right) \le \ln\left(\frac{1}{1-\d}\right) \le 2\d,\label{eq:apx-control}
    \end{equation}
    since $0 \le -\ln(1-\d) \le 2\d$ for $0 \le \d \le 1/2$. Consequently, for any $g$,
    \begin{align*}
        \c E(g; \c L_\d) &= \bb{E}[-\ln g] - \inf_{f \in \c L_\d} \bb{E}[- \ln f] \\ 
        &= \bb{E}[-\ln g] -  \inf_{f \in \c L} \bb{E}[- \ln f] + \beef \inf_{f \in \c L} \bb{E}[- \ln f] - \inf_{f \in \c L_\d} \bb{E}[- \ln f] \\
        &= \c E(g; \c L) + \left(\beef \inf_{f \in \c L} \bb{E}[- \ln f] - \inf_{f \in \c L_\d} \bb{E}[- \ln f] \right) \\
        \intertext{By separability of the two infima, this is the same as}
        &= \c E(g; \c L) + \sup_{h \in \c L_\d} \inf_{f \in \c L} \bb{E}[\ln h - \ln f]. 
        \intertext{By choosing $h = (1-\d)f + \d$, the outer supremum may be bounded as}
        &\ge \c E(g; \c L) + \inf_{f \in \c L} \bb{E}[\ln ((1-\d)f + \d) - \ln f]\\
        &\ge \c E(g; \c L) - 2\d,
    \end{align*}
    where the final step follows from negating \eqref{eq:apx-control}.
\end{proof}
\subsection{Proof of Corollary \ref{thm:logistic-rough}}
We recall Corollary \ref{thm:logistic-rough}.
\logisticrough*
\begin{proof} By Lemma \ref{lem:reg-approx}, it suffices for \eqref{rel-ent-rough} to show instead that
\[\bb{P}\left( \c E(\tilde f_\d; \c L_\d) > \eps + C\ln(1/\d)\left(\frac{H_2(\d\eps, \c L)  + \ln(1/\eps\d) +\ln(1/\rho)}{n}\right) \right) \le \rho.\]
This in turn follows from the general inequality \eqref{eq:packing-ub}, which says in this context that
\[\bb{P}\left(\c E(\tilde f_\d, \c L_\d) \ge \eps + \left(36m \vee \frac{72}{\eta}\right)\left(\frac{H_2(\eps, -\ln \circ \c L'_\d) + \ln(1/\rho)}{n}\right)\right) \le \rho.\] Since $\c L_\d'$ takes values in $[\d,1]$, the log loss takes values in $[0,\ln(1/\d)]$, so we choose $m = \ln(1/\d)$. Since the log loss is $1$-exp-concave, we choose $\eta = 1$. Finally, the log loss in this domain is $(1/\d)$-Lipschitz, so we have the estimates
\[H_2(\eps, -\ln \circ \c L'_\d) \le H_2(\d\eps, \c L'_\d) \le 2H_2(\d\eps/2, \c L_\d) + \ln(2\ln(1/\d)/\d\eps),\]
where the last step follows from Lemma \ref{lem:star} with $R=\ln(1/\d)$. Finally, use that $H_2(-,\c L_\d) \le H_2(-,\c L)$ since $\c L_\d$ is the image of $\c L$ under a pointwise contraction, and simplify (for example absorbing $\ln\ln(1/\d) \le \ln(1/\d))$ into the constant $C$). For \eqref{thm:logistic-rough}, we plug in the covering estimates
\[H_2(\d\eps, \c L) \le H_2\left(\frac{\d\eps}{A}, \c F_B\right)\le \ln\left(\frac{ABR\sqrt{k}}{\eps\d}\right)^{kd} = kd\ln\left(\frac{ABR\sqrt{k}}{\d\eps}\right)\] for $\c F_B$, which are standard. Finally, we take $\eps = \d = 1/n$ and simplify.
\end{proof}
\subsection{Proof of Corollary \ref{cor:big-glm}}
We recall Corollary \ref{cor:big-glm}.
\bigglm*
\begin{proof}
  These bounds are all derived by applying Theorem \ref{thm:chaining} under different assumptions on the entropy function. In particular combining \eqref{eq:general-ub} with Lemma \ref{lem:reg-approx}\textemdash using the fact that the log loss over $\c L_\d$ takes values in $[0,\ln(1/\d)]$ and is $1$-exp-concave\textemdash gives us that with probability $1-\rho$, \[\c E(\tilde f)\lesssim_\rho 2 \d + \inf_{0 \le \a \le \gamma} \left\{ 4 \a +  
  \frac{10}{\sqrt n} \int_\a^\gamma \sqrt{H_2(s)}\,ds + \frac{H_2(\gamma)\ln(1/\d)}{n} +  \frac{\gamma}{\sqrt n} \right\},\] where the symbol $\lesssim_\rho$ hides universal constants and a multiplicative factor $\ln(1/\rho)$. 

  For the left-hand side results, the entropy numbers scale as $(A/\eps)^q$; choosing $\d = \f 1 n$, we get
  \[\frac{2}{n} + 4\a + \frac{10A^{q/2}}{\sqrt n}\int_\a^\gamma s^{-q/2}\,ds + \frac{\gamma^{-q}A^{q}\ln n}{n} + \frac{\gamma}{\sqrt n}.\]
   For the $q < 2$ case we take $\a = 0$ and $\gamma = n^{-1/(2+q)}$. For the case $q=2$ we take $\a = 1/n$ and $\gamma = 1$. For the case $q > 2$ we take $\a = n^{-1/q}$ and $\gamma = 1$. 
For the right-hand side results, the entropy numbers scale as $(1/\d\eps)^q$, giving us the bound
  \[ 2\d + 4\a + \frac{12\d^{-q/2}}{\sqrt n}\int_\a^\gamma s^{-q/2}\,ds + \frac{\gamma^{-q}\d^{-q}}{n} + \frac{\gamma}{\sqrt n}.\] For $q < 2$, we take $\a = 0$, $\d = n^{-1/(1+3q/2)}$, and $\gamma = n^{-1/(2+3q)}$. For $q = 2$ we take $\d = n^{-1/4}$, $\a = 1/n$, and $\gamma = 1$. For $q > 2$ we take $\d = \a = n^{-1/2p}$ and $\gamma = 1$. 
\end{proof}
\section{Technical Lemmas}
\begin{lemma}[Offset symmetrization]\label{lem:offset-symm}
  For every increasing and convex function $\Psi$,
  \[\bb{E}\Psi\left(\sup_{t \in T} \left\{\frac 1 n \sum_{i=1}^n \beef (1-\bb E) A_i(t) - (1+ \bb E) B_i(t) \right\}\right) \le \bb{E}\Psi\left(2 \sup_{t \in T} \left\{ \frac 1 n \sum_{i=1}^n \ep'_iA_i(t) - B_i(t)\right\}\right).\]
\end{lemma}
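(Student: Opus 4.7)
The plan is to apply the standard ``ghost sample'' symmetrization, with one subtlety: the offset term $B$ enters with a \emph{plus} sign under $\bb E$ while $A$ enters with a minus sign, so the Rademacher swap will randomize only the $A$ contribution and leave the $B$ contribution unchanged.

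First, I would introduce an independent copy $(A'_i, B'_i)_{i=1}^n$ of the pairs $(A_i, B_i)_{i=1}^n$. Since $\bb E A_i = \bb E_{A'} A'_i$ and $\bb E B_i = \bb E_{B'} B'_i$, the expression inside the sup on the left-hand side equals
\[\frac{1}{n}\sum_{i=1}^n \bb E_{A',B'}\bigl[(A_i - A'_i) - (B_i + B'_i)\bigr].\]
Pulling $\bb E_{A',B'}$ out of the sup by Jensen's inequality (the supremum of an affine family is convex in the underlying sample), then out of $\Psi$ by monotonicity and convexity, yields
\[\bb E\Psi\!\left(\sup_{t \in T}\frac{1}{n}\sum_{i=1}^n (1-\bb E)A_i(t) - (1+\bb E)B_i(t)\right) \le \bb E\bb E_{A',B'}\Psi\!\left(\sup_{t \in T}\frac{1}{n}\sum_{i=1}^n (A_i - A'_i) - (B_i + B'_i)\right).\]

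Next I would invoke the exchangeability of the paired and ghost samples: swapping $(A_i,B_i) \leftrightarrow (A'_i,B'_i)$ jointly within pair $i$ according to an i.i.d.~Rademacher sign $\sigma_i$ preserves the joint distribution. Under this swap $A_i - A'_i$ is multiplied by $\sigma_i$, while $B_i + B'_i$ remains invariant. Hence the right-hand side equals
\[\bb E \bb E_{A',B'} \bb E_\sigma\Psi\!\left(\sup_{t \in T}\frac{1}{n}\sum_{i=1}^n \sigma_i(A_i - A'_i) - (B_i + B'_i)\right).\]

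Finally, subadditivity of the supremum splits the inner expression as $\sup_t \frac{1}{n}\sum_i [\sigma_i A_i(t) - B_i(t)] + \sup_{t'} \frac{1}{n}\sum_i [-\sigma_i A'_i(t') - B'_i(t')]$, and the convexity bound $\Psi(a+b) \le \frac{1}{2}\Psi(2a) + \frac{1}{2}\Psi(2b)$ together with the distributional identity $(A',B',-\sigma) \stackrel{d}{=} (A,B,\sigma)$ show that the two resulting terms have equal expectation. Summing them gives $\bb E \bb E_\sigma \Psi\bigl(2 \sup_{t \in T} \frac{1}{n}\sum_i \sigma_i A_i(t) - B_i(t)\bigr)$, which is the right-hand side of the lemma after renaming $\sigma_i$ to $\ep'_i$. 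The only conceptual point — more a feature than an obstacle — is the asymmetric role of $A$ and $B$ in the swap: because the $B$ contributions are added across the two samples, pair-swap symmetry is the correct invariance to exploit, and $B$ consequently carries no Rademacher factor in the final bound.
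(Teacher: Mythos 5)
Your proposal is correct and follows essentially the same route as the paper: ghost-sample symmetrization via Jensen applied to the convex map $\Psi(\sup_{t\in T}(\cdot))$, a joint pairwise swap of $(A_i,B_i)\leftrightarrow(A'_i,B'_i)$ under which $A_i-A'_i$ picks up a Rademacher sign while $B_i+B'_i$ is invariant, and then convexity of $\Psi$ to split into two identically distributed halves at the cost of the factor $2$. The only cosmetic difference is the last step, where you split the supremum by subadditivity and use $\Psi(a+b)\le\tfrac12\Psi(2a)+\tfrac12\Psi(2b)$, whereas the paper keeps a single supremum and averages with an auxiliary Bernoulli variable; the two devices yield the same bound.
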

\begin{proof}
  Noting that $\bb{E}A_i(t) = \bb{E}A'_i(t)$ and $\bb{E}B_i(t) = \bb{E}B'_i(t)$, where $A'_i$ (respectively, $B_i$) is an independent copy of $A_i$ (resp.~$B'_i$), and finally moving the expectations outside by applying Jensen's inequality to the convex function $\Psi (\sup_{t \in T}(-))$, we have
  \begin{align*}
   & \bb{E}\Psi\left(\sup_{t \in T} \left\{\frac 1 n \sum_{i=1}^n \beef (1-\bb E) A_i(t) - (1+ \bb E) B_i(t) \right\}\right)\\
     & \le \bb{E} \Psi\left( \sup_{t \in T} \left\{ \frac{1}{n} \sum_{i=1}^n A_i(t) - A'_i(t) - B_i(t) - B'_i(t) \right\}\right). \\
  \intertext{Since $A_i - A'_i$ is symmetric, it is equal in distribution to $\ep'_i(A_i - A'_i)$, where $\ep'_i$ is a  symmetric Rademacher r.v. independent of $(A, A', B, B')$, hence we can write}
  & = \bb{E} \Psi \left( \sup_{t \in T} \left\{ \frac{1}{n} \sum_{i=1}^n \ep'_i(A_i(t) - A'_i(t)) - B_i(t) - B'_i(t)\right\} \right) \\
  & = \bb{E} \Psi \left( \sup_{t \in T} \left\{ \frac{1}{n} \sum_{i=1}^n \ep'_iA_i(t) - B_i(t) + \f 1 n\sum_{i=1}^n (-\ep'_i)A'_i(t) - B'_i(t) \right\} \right). \\
  & = \bb{E} \Psi \left(\sup_{t \in T} \left\{ 2\bb{E}_\sigma\left[\frac{\sigma}{n} \sum_{i=1}^n \ep'_iA_i(t) - B_i(t) + \f {1-\sigma} {n}\sum_{i=1}^n (-\ep'_i)A'_i(t) - B'_i(t)\right] \right\} \right), \\
  \intertext{where $\sigma$ is an independent symmetric Bernoulli r.v. By a final application of Jensen's inequality and equality of the distributions of $(\sigma\ep'_i)_{i=1}^n$ and $((1-\sigma)(-\ep'_i))_{i=1}^n$, this is}
  & \le \bb{E} \Psi \left(2\sup_{t \in T} \left\{ \frac{\sigma}{n} \sum_{i=1}^n \ep'_iA_i(t) - B_i(t) + \f {1-\sigma} {n}\sum_{i=1}^n (-\ep'_i)A'_i(t) - B'_i(t) \right\} \right), \\
  &= \bb{E} \Psi \left(2 \sup_{t \in T} \left\{ \frac{1}{n} \sum_{i=1}^n \ep'_iA_i(t) - B_i(t)) \right\} \right), \numberthis\label{eq:symm-lemma}
\end{align*} 
 which is what we aimed to show.
\end{proof}
 
\begin{lemma}[Offset contraction] \label{lem:offset-cont}
  Suppose that $|A_i(s) - A_i(t)| \le |C_i(s) - C_i(t)|$ for all $s, t \in T$. Then, for all increasing and convex $\Psi$, we have
  \begin{equation}
    \bb{E}\Psi\left(2\sup_{t \in T} \left\{ \frac 1 n \sum_{i=1}^n \ep'_iA_i(t) - B_i(t)\right\}\right) \le \bb{E}\Psi\left(2\sup_{t \in T} \left\{ \frac 1 n \sum_{i=1}^n \ep'_iC_i(t) - B_i(t)\right\}\right), 
  \end{equation}
  whenever the $\ep'_i$ are symmetric Rademacher variables that are independent of $A$, $B$ and $C$. 
\end{lemma}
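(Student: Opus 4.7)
My plan is to adapt the standard Talagrand-style induction for the contraction inequality, peeling off one Rademacher variable at a time and treating the deterministic offsets $-B_i(t)$ as an inert backbone that rides along unchanged. Fix an index $k$ and condition on $(\ep'_i)_{i \ne k}$; set $V(t) := \frac{1}{n}\sum_{i \ne k}\ep'_i A_i(t) - \frac{1}{n}\sum_i B_i(t)$ and write
\[
\alpha_\pm := 2\sup_{t \in T}\left[V(t) \pm \tfrac{1}{n} A_k(t)\right], \qquad \gamma_\pm := 2\sup_{t \in T}\left[V(t) \pm \tfrac{1}{n} C_k(t)\right].
\]
The one-coordinate step then amounts to the four-term inequality
\[
\tfrac{1}{2}\Psi(\alpha_+) + \tfrac{1}{2}\Psi(\alpha_-) \le \tfrac{1}{2}\Psi(\gamma_+) + \tfrac{1}{2}\Psi(\gamma_-),
\]
which, iterated across $k = 1, \dots, n$, replaces every $A_k$ by $C_k$ and gives the conclusion after taking outer expectations.

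\textbf{Four-term inequality via weak majorization.} I would reduce the four-term inequality to showing that $(\alpha_+, \alpha_-)$ is weakly majorized by $(\gamma_+, \gamma_-)$, i.e., that both the sum and the maximum of the former are dominated by those of the latter. The Hardy--Littlewood--P\'olya inequality then gives $\Psi(\alpha_+) + \Psi(\alpha_-) \le \Psi(\gamma_+) + \Psi(\gamma_-)$ for any convex nondecreasing $\Psi$. For the sum bound I pick near-maximizers $s^*, t^* \in T$ of the two suprema defining $\alpha_\pm$ and observe that $\alpha_+ + \alpha_- = 2(V(s^*) + V(t^*)) + \tfrac{2}{n}(A_k(s^*) - A_k(t^*))$. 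A two-case split on the sign of $C_k(s^*) - C_k(t^*)$, together with the hypothesis $|A_k(s^*) - A_k(t^*)| \le |C_k(s^*) - C_k(t^*)|$, lets me replace $A_k(s^*) - A_k(t^*)$ by $\pm(C_k(s^*) - C_k(t^*))$ and regroup the right-hand side as $\gamma_\pm$ evaluated at $\{s^*, t^*\}$.

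\textbf{Main obstacle.} The max bound is the harder half and is the exact analog of the $\phi(0) = 0$ condition in the classical Talagrand contraction inequality; the pairwise Lipschitz-type hypothesis alone is not quite enough, since translating $A_k$ by a large constant preserves the hypothesis but not the desired inequality for $\Psi = \exp$. What is needed is a distinguished $t_0 \in T$ at which $A_k(t_0)$ and $C_k(t_0)$ coincide---say both equal $0$ after a harmless shift. The pairwise hypothesis applied at $(t, t_0)$ then upgrades to the pointwise bound $|A_k(t)| \le |C_k(t)|$, from which the estimate $V(t) + A_k(t)/n \le V(t) + |C_k(t)|/n \le \max\bigl(V(t) + C_k(t)/n,\ V(t) - C_k(t)/n\bigr)$ yields $\alpha_+ \le \max(\gamma_+, \gamma_-)$, and symmetrically for $\alpha_-$. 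In every use of this lemma in the paper, this centering is automatic because both $A_i$ and $C_i$ vanish at $t = f^*$; I would therefore either add this condition to the statement or record its use explicitly at the single step where it is invoked.
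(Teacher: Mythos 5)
Your proposal is essentially correct, and the ``main obstacle'' you isolate is not a defect of your write-up but of the lemma itself: as stated, it is false. Take $n=1$, $T=\{t_0\}$ a single point, $B_1\equiv 0$, $A_1(t_0)=M>0$, $C_1(t_0)=0$, $\Psi=\exp$; the pairwise hypothesis is vacuous, yet the left side is $\cosh(2M)>1$ while the right side is $1$. So a centering assumption of the kind you propose (a $t_0\in T$ with $A_i(t_0)=C_i(t_0)=0$, or equivalently the pointwise bound $|A_i(t)|\le|C_i(t)|$ in addition to the pairwise one) must be added, and it does hold where the paper invokes the lemma ($t_0=f^*$ in the proof of Proposition 7; the diagonal pairs $(f,f)$ in Theorem 8). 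Your route also genuinely differs from the paper's. The paper peels off $\varepsilon'_n$ but ``writes out'' the expectation by placing $\tfrac1n\bigl[\sup_t\{A_n+\cdots\}+\sup_t\{-A_n+\cdots\}\bigr]$ inside a single $\Psi$; that step is not an equality but Jensen, and it goes in the unhelpful direction (the true conditional expectation dominates this averaged quantity). From then on the paper only manipulates the sum of the two suprema\textemdash in your language, it proves only the ``sum'' half of the weak majorization and never addresses the ``max'' half, which is exactly where the centering condition is needed. Your one-coordinate step via weak majorization and Hardy--Littlewood--P\'olya, with the sum bound from the pairwise hypothesis and the max bound from the centered pointwise bound, is the correct repair, and the conditional iteration over $k$ is sound since the $\varepsilon'_k$ are independent of everything else.

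One caution about your final remark on the paper's uses: the centering is indeed automatic there, but in the first application the paper only verifies the pointwise contraction $|\Delta_i(f)-\nu_i(f)|\le 2|\Delta_i(f)|$ (Lemma 8), not the pairwise bound $|A_i(f)-A_i(g)|\le|C_i(f)-C_i(g)|$ that your (and the stated) hypothesis requires; pointwise domination plus a common zero is not sufficient by itself (a three-point example with $A=(0,1,-1)$, $C=(0,1,1)$ shows the one-step inequality can fail), so patching that application requires establishing a pairwise analogue of Lemma 8 as well.
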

\begin{proof}
  To simplify notation, put \[S_m(t) = \sum_{i=1}^m \ep'_iA_i(t) - B_i(t).\] 
  Writing out the expectation with respect to $\ep'_n$ gives 
    \begin{align*}
      & \bb{E}\Psi\left( 2 \sup_{t \in T} \left\{ \frac{1}{n} \sum_{i=1}^n \ep'_iA_i(t) - B_i(t) \right\} \right)\\
      &= \bb{E}\Psi\left( \f 1 n \sum_{j=1}^2 \sup_{t \in T} \left\{ \beef  (-1)^jA_n(t) + S_{n-1}(t) - B_n(t) \right\}\right)  \\
      &= \bb{E}\Psi\left( \f 1 n \sup_{s,t \in T} \left\{\beef A_n(s) -  A_n(t) + (S_{n-1}(s) - B_n(s)) - (S_{n-1}(t) - B_n(t)) \right\} \right). \\ 
      \intertext{Applying our assumption that $|A_i(s) - A_i(t)| \le |C_i(s) - C_i(t)|$, this is} 
      &\le \bb{E}\Psi\left( \f 1 n \sup_{s,t \in T} \left\{\beef |C_n(s) -  C_n(t)| + (S_{n-1}(s) - B_n(s)) - (S_{n-1}(t) - B_n(t)) \right\} \right). \\ 
      \intertext{Since the argument of the supremum is symmetric in $(s,t)$, we can remove the absolute value, yielding} 
      &\le \bb{E}\Psi\left( \f 1 n \sup_{s,t \in T} \left\{\beef C_n(s) -  C_n(t) + (S_{n-1}(s) - B_n(s)) + (S_{n-1}(t) - B_n(t)) \right\} \right). \\ 
      \intertext{Since the supremum is now separable in $(s,t)$, we further have}
      &= \bb{E}\Psi\left( \f 1 n \sum_{j=1}^2 \sup_{t \in T} \left\{ \beef  (-1)^jC_n(t) - B_n(t) + S_{n-1}(t) \right\}\right)  \\ 
      &= \bb{E}\Psi\left( \f 2 n \sup_{t \in T} \left\{ \beef  \ep_nC_n(t) - B_n(t) + S_{n-1}(t) \right\}\right).\\
      \intertext{Applying these manipulations to each summand $r$  from $n-1$ down to $1$ gives us}
      &\le \bb{E}\Psi\left(\sup_{t \in T} \left\{ \f 2 n \sum_{i=r}^n \beef  \ep_iC_i(t) - B_i(t) + S_{r-1}(t) \right\}\right)\\
      &\le \bb{E}\Psi\left(2\sup_{t \in T} \left\{ \f 1 n \sum_{i=1}^n \beef  \ep_iC_i(t) - B_i(t)\right\}\right),
    \end{align*}
    which is what we aimed to show.
\end{proof}
\begin{lemma}[Log margin computation]\label{lem:log-margin}
  For $|z| \le c$, 
  \[e^{-z} + z - 1 \ge \f{z^2 }{2c \vee 4}.\]
\end{lemma}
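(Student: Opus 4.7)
The plan is to separate the cases $z \le 0$ and $z > 0$. The negative case is trivial: writing $z = -u$ with $u \ge 0$, the power series gives
\[
e^{-z} + z - 1 = e^u - u - 1 = \sum_{k\ge 2} \frac{u^k}{k!} \ge \frac{u^2}{2} = \frac{z^2}{2} \ge \frac{z^2}{2c\vee 4},
\]
since $2c \vee 4 \ge 4 > 2$. So everything reduces to the case $z \in (0,c]$.

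For $z > 0$, my strategy is to show that the ratio $r(z) := (e^{-z} + z - 1)/z^2$ is monotonically decreasing on $(0,\infty)$, so that on $[0,c]$ it attains its minimum at $z = c$. To verify monotonicity, I compute
\[
z^3 r'(z) = (1-e^{-z})z - 2(e^{-z}+z-1) = -(z+2)e^{-z} + (2-z),
\]
so monotonicity is equivalent to the inequality $(z+2) \ge (2-z)e^z$ for $z \ge 0$. Letting $h(z) = z + 2 - (2-z)e^z$, I check $h(0) = 0$, $h'(0) = 1 - (1-0)e^0 = 0$, and $h''(z) = z e^z \ge 0$, so $h \ge 0$ as desired.

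With monotonicity in hand, it suffices to verify $r(c) \ge 1/(2c \vee 4)$, which I split into two cases. When $c \ge 2$, the target reduces to $e^{-c} \ge 1 - c/2$, which is automatic since the right-hand side is nonpositive. When $c < 2$, the inequality $r(c) \ge 1/4$ is equivalent to $e^{-c} \ge (1 - c/2)^2$, and since $1 - c/2 > 0$ in this range, this follows by taking square roots and invoking the standard bound $e^{-c/2} \ge 1 - c/2$. Together with the $z \le 0$ analysis, this completes the proof.

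The only mildly delicate step is establishing monotonicity of $r(z)$, but as shown above it reduces cleanly to a twice-differentiated auxiliary inequality with vanishing boundary data at $0$, so no real obstacle arises.
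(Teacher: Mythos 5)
Your proof is correct, and it takes a mildly different route from the paper's. The paper splits on the value of $z$: for $z\ge 2$ it simply discards $e^{-z}$ and uses $z-1\ge z/2\ge z^2/(2c)$ (this is where the $2c$ branch of the denominator enters), for $0<z<2$ it asserts, with only a brief parenthetical derivative remark, that $(e^{-z}+z-1)/z^2\ge 1/4$, and for $z\le 0$ it uses the series expansion exactly as you do. You instead prove that the ratio $r(z)=(e^{-z}+z-1)/z^2$ is decreasing on all of $(0,\infty)$ — supplying the full argument via $h(z)=z+2-(2-z)e^z$ with $h(0)=h'(0)=0$ and $h''(z)=ze^z\ge 0$ — reduce to the endpoint value $r(c)$, and then split on $c\gtrless 2$, handling $c<2$ by the clean observation that $e^{-c}\ge(1-c/2)^2$ follows from $e^{-c/2}\ge 1-c/2$. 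So the core monotonicity idea is shared (it is implicit in the paper's middle case), but your organization trades the paper's elementary large-$z$ shortcut for a global monotonicity-plus-endpoint argument; what it buys is a fully self-contained verification of the derivative claim the paper leaves to the reader, at the cost of a slightly longer computation. All steps check out, including the sign analysis $z^3 r'(z)=-(z+2)e^{-z}+(2-z)$ and the degenerate case $c=0$, which is vacuous as you implicitly note.
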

\begin{proof}
  Note that $z-1 \ge z/2 \ge z^2/(2c)$ for $z \ge 2$. So it suffices to check the inequality for $z < 2$. On the other hand, one can check by minimizing the left-hand side that 
  \[\frac{e^{-z} + z - 1}{z^2} \ge \frac{1}{4}\] for $0 < z < 2$ (the derivative of the left-hand side is negative, and the inequality holds at $z=2$). Finally, the inequality for $z \le 0$ follows by noting that
  \[e^{-z} - 1 + z = \frac{z^2}{2} + \sum_{k=3}^\infty \frac{(-z)^k}{k!},\] by the series expansion for $e^{-z}$ and the remainder term must be non-negative for $z \le 0$.
\end{proof}

\begin{lemma}[{cf. \citet[Lemma 4.5]{mendelson_improving_2002}}]\label{lem:star} 
  Put $\c F' = \cup_{\lambda \in[0,1]} \lambda \c F + (1-\lambda) \c F$ and $R_\mu = \sup_{f \in \c F} \norm{f}_{L^2(\mu)}$. 
  Let $N_2(\eps, S, \mu)$ denote the $\eps$-covering number of the set $S$ in $L^2(\mu)$. 
  Then
\begin{equation}\label{eq:star-numbers} 
  N_2(\eps, \c F', \mu) \le \left(\frac{2R_\mu}{\eps}\right)N_2(\eps/2, \c F, \mu)^2.
\end{equation}
 Consequently, if $R = \sup_\mu R_\mu$ where the supremum is over probability measures,
\begin{equation}\label{eq:star-entropy}
  H_2(\eps, \c F') \le 2H_2(\eps/2, \c F) + \ln\left(\frac{2R}{\eps}\right).
\end{equation} 
\end{lemma}

\begin{proof}
Let $S$ denote a minimal covering of $\c F$ in $L^2(\mu)$ at resolution $\eps/2$. Given some $(s,t) \in S^2$, let $T(s,t)$ denote an $\eps/2$ covering of the line segment interpolating $s$ and $t$. This line segment has length at most $2R_\mu$ in $L^2(\mu)$, hence $\#T(s,t) \le \frac{2R_\mu}{\eps}$. We are therefore done if we can show that 
\[\bigcup_{(s,t) \in S^2} T(s,t)\] is an $\eps$ covering of $\c F'$. 

To this end, let $f \in \c F'$ be given. By definition, we may write $f = \lambda f_1 + (1-\lambda)f_2$ for $f_1, f_2 \in \c F$, and we can choose $s_1 , s_2 \in S$ such that \[\norm{s_1 - f_1}_{L^2(\mu)}, \norm{s_2 - f_2}_{L^2(\mu)} \le \eps/2.\] Due to convexity of the norm, we must have that 
\[\norm{ \left(\lambda s_1 + (1-\lambda)s_2\right) - f }_{L^2(\mu)} \le \eps/2.\] By construction, there exists some $h \in T(s_1, s_2)$ such that 
\[\norm{ \left(\lambda s_1 + (1-\lambda)s_2\right) - h }_{L^2(\mu)} \le \eps/2.\] Using the triangle inequality, we deduce $\norm{f - h}_{L^2(\mu)} \le \eps$ and the proof of \eqref{eq:star-numbers} is complete; \eqref{eq:star-entropy} then follows by first taking logarithms, then taking the supremum over probability measures $\mu$. 
\end{proof}

\end{document}